\newcommand{\argmin}{\operatornamewithlimits{argmin }}
\newcommand{\argmax}{\operatornamewithlimits{argmax }}
\newtheorem{prop}{Proposition}
\title{Probabilistic Registration for  Gaussian Process 3D shape modelling in the presence of extensive missing data
	\thanks{This work has received funding from the European Union’s Horizon 2020 research and innovation programme  under the Marie Skłodowska-Curie Project BIGMATH, Grant Agreement No 812912.}
}
\author{
	Filipa Valdeira \\
	Department of Environmental Science and Policy \\
	Università Degli Studi di Milano \\
	Milan\\
	\texttt{filipa.marreiros@unimi.it} \\
	\And	 
	Ricardo Ferreira\\
	$\mu$Roboptics \\
	Lisbon \\
	\texttt{alessandra.micheletti@unimi.it} \\
	\And
	 Alessandra Micheletti\\
	Department of Environmental Science and Policy \\
	Università Degli Studi di Milano \\
	Milan\\
	\texttt{alessandra.micheletti@unimi.it} \\
	\And
	Cl\'audia Soares \\
	Computer Science Department, NOVA LINCS \\
	NOVA School of Science and Technology, Universidade NOVA de Lisboa \\
	Lisbon\\
	\texttt{claudia.soares@fct.unl.pt} \\
}
\begin{document}
	
	\maketitle
	
	\begin{abstract} We propose a shape fitting/registration method based on a Gaussian Processes formulation, suitable for shapes with extensive regions of missing data. Gaussian Processes are a proven powerful tool, as they provide a unified setting for shape modelling and fitting. While the existing methods in this area prove to work well for the general case of the human head, when looking at more detailed and deformed data, with a high prevalence of missing data, such as the ears, the results are not satisfactory. In order to overcome this, we formulate the shape fitting problem as a multi-annotator Gaussian Process Regression and establish a parallel with the standard probabilistic registration. The achieved method SFGP shows better performance when dealing with extensive areas of missing data when compared to a state-of-the-art registration method and current approaches for registration with pre-existing shape models. Experiments are conducted both for a 2D small dataset with diverse transformations and a 3D dataset of ears.
	\end{abstract}

	\keywords{Gaussian Processes\and Shape modelling \and Registration \and Variational Bayes}

	\section{Introduction}
	
	Consider the problem of predicting a complex shape, like a human ear, from a dataset of similar point clouds and partially observed points of this shape. In this setting, we face both the registration and modelling of different point clouds. There are increasing areas of application for 3D shape modelling, in particular when it comes to the human body, spread over medical applications (segmentation \cite{article:Apps_segmentation}, prosthesis design \cite{article:UnsupervisedPipeline}, surgical planning \cite{article:Apps_surgical_plan}), surveillance (face recognition \cite{article:3DMM_Basel_Face_Model}, tracking \cite{article:Apps_tracking}) or human-machine interaction (expression/emotion detection \cite{article:emotion}, virtual humans \cite{article:virtual_humans}). A large number of approaches focuses on the human head, with particular incidence on the face region, and increasingly better models have been achieved for coarse-grained requirements \cite{article:3DMM_large_scale,article:Gaussian_Process_Combining_3DMM}.

	However, small and detailed areas are still not accurately represented and attempts to tackle this problem are currently emerging \cite{article:Gaussian_Process_Combining_3DMM_Ear}. Our driving example is the modelling of an ear, representative of a fine-detailed region with extensive data problems. Given the challenging shape of this face segment, the 3D scan procedure leads to broad regions of missing data and a high level of noise. Our approach is nonetheless generic enough to be applied to any other shape, as seen in Figure~\ref{fig:Fish_Example} of Section~\ref{sec:experiments} with the 2D fish data.

	A standard approach to obtain a statistical shape model from a given dataset is through 3D Morphable Models (3DMM), first proposed in \cite{article:Morphable_Blanz_Vetter}. Although they include both a shape and an appearance component, here the focus is only on the former. Given a set of scans, the standard procedure to obtain a 3DMM entails two main steps: \textit{dense correspondence} and \textit{modelling}. During the former, the original samples of unorganized point clouds are set into correspondence, i.e., a re-parametrization is found such that points with the same index have the same semantic meaning (for instance, point $i$ of each scan represents the tip of the nose). This is a requirement for the subsequent \textit{modelling} step where the deformations of each sample are studied, in order to express shape variability in a lower dimension. Correspondence across a dataset can be achieved by deforming a generic reference shape to each target scan, without prior information on the shape characteristics. This approach is a subclass of correspondence methods denoted as \textit{registration} \cite{article:correspondence_Review} and the standard choice for shape modelling pipelines \cite{article:Morphable_Blanz_Vetter,article:3DMM_Basel_Face_Model}. Given that no shape prior is taken into account, registration is by itself a broad area of study, developed separately from the modelling setting. 
	
	Therefore, traditional approaches to 3DMM directly employ a state-of-the-art registration method, followed by Principal Component Analysis, to find a low-dimensional representation of shape variability \cite{article:Morphable_Blanz_Vetter,article:3DMM_Basel_Face_Model}. Further developments on the modelling side have included increasing the size of the training set \cite{article:3DMM_large_scale} for higher variability, using part-based models \cite{article:Model_partbased,article:Model_partbased_wavelets} instead of global ones or the use of different shape spaces \cite{article:sphere_space,article:tangentPCA}. Nonetheless, and particularly for shapes with non-rigid deformations, correspondence remains an open challenge and a limiting step on the quality of the resulting model \cite{article:Review_Non_Rigid_Shape_Analysis}. This has motivated proposals on dense correspondence specifically tailored for 3D faces in recent years \cite{article:Registration_Face_semantic_2018, article:Registration_Face_tangencial_2021,article:Registration_Face_Boosting_2019}.

	However, regardless of the registration method in place, an underlying model restricts deformations of the reference shape, such that unfeasible shapes are avoided. In fact, the process of registration can be understood, on a higher level of abstraction, as \textit{model fitting}  \cite{article:fitting_2010,article:fitting_2015}, i.e. deforming the reference shape according to a pre-existing model to resemble a given sample. The main difference between these two concepts is that restrictions to deformations are enforced by data (or specific shape characteristics) in model fitting and by a regularization parameter (independent of shape) in registration. Despite this close relationship, they are distinct branches of the literature since, initially, the raw data is not registered and correspondence must be achieved without access to a previous model (which can only be obtained upon registration). This originates complex pipelines, where different models and assumptions on the shape deformations are sequentially considered, for increasing refinement. A unified framework is desirable to achieve more principled approaches, as suggested in \cite{article:unified_approach}.
	
	We shall now formalize the previously stated conceptual problem and introduce the unified framework, as well as the motivation for its use in our setting. We consider a \textit{reference} $R = \{r_1,\dots ,r_{N_R}\} \subset \mathbb{R}^d$ and a \textit{target shape}  $S =  \{s_1,\dots,s_{N_S}\} \subset \mathbb{R}^d $, with a possibly different number of points $N_R$ and $N_S$, respectively. The former is a representative example of the shape being studied, as close as possible to any other shape, without missing data, outliers, or noise; while the latter is any sample of the dataset. In previous work \cite{article:UnsupervisedPipeline} with ear data scans, we have shown that most of the effects of translations and rotations can be previously removed; so it is here considered that the target shape and reference were already preprocessed. Additionally, scaling is kept in the model, as it represents differences in ear size. Consequently, only non-rigid deformations between the reference and a shape are explicitly modelled.
	
	Under these assumptions, any shape is obtained from a reference through a re-parametrization and a set of non-rigid deformations $\theta$. In the discrete setting, a re-parametrization can be represented by a permutation matrix $P$. Thus, the problem is formulated as finding the optimal deformations between $S$ and $R$
	\begin{equation}
		P^*, \theta^* \in \argmin_{P,\theta}  d(\mathcal{T}(PR,\theta),S) ,\label{eq:ProbForm}
	\end{equation} where $d(R,S)$ is a measure of dissimilarity quantifying the differences between two shapes and $\mathcal{T}(PR,\theta)$ is the transformed reference after application of the deformations $\theta$ and permutation $P$.
	
	The solution of \eqref{eq:ProbForm} provides a deformed reference whose points are in correspondence with the target shape. However, as stated above, this problem is generally not tackled in a unified way. On the registration side, a generic constraint is applied to the deformations, while $P$ is retrieved, disregarding any knowledge of the particular shape. On the fitting side, correspondence is assumed (for example by previously applying a registration method) or retrieved in a trivial manner (e.g. by taking the closest point). Both of these options entail that the correspondence is obtained with a different model $\mathcal{T}(PR,\theta)$ than the fitting. Looking at the two processes in the same setting may be beneficial as it allows both of them to benefit from additional information. In \cite{article:unified_approach}, the authors propose to formulate the unified framework through the use of Gaussian Processes (GP). The core idea is that by designing an appropriate kernel, prior beliefs about the shape structure can be incorporated independently from the registration algorithm, allowing the shape fitting and registration to leverage the same model.

	This framework has proven successful and has led to increasingly improved models of the human head \cite{article:Morphable_Blanz_Vetter,article:Gaussian_Process_Combining_3DMM}. However, developments have been made on the modelling side by increasing the size of training datasets \cite{article:3DMM_large_scale} or improving model combination \cite{article:Gaussian_Process_Combining_3DMM}, i.e., how to bring together models originating from different datasets. By applying the GP modelling to 3D ear point clouds, we have observed that indeed this is the most promising setting \cite{article:UnsupervisedPipeline}, but it calls for an improved outlook on the registration procedure. When models of the full head are considered, ears are a small detail that is often overlooked or disregarded \cite{article:Morphable_Blanz_Vetter,article:Registration_Face_tangencial_2021}, so a straightforward correspondence method is enough to provide acceptable accuracy, explaining the limited advances on this area. It is worth noticing that a recent approach within the GP framework extends the head model to include ears \cite{article:Gaussian_Process_Combining_3DMM_Ear}; however, it requires the identification of 50 manual landmarks for registration of the ear region. Besides, we approach the problem under a different perspective, where the ear scans are first reconstructed and the model is subsequently retrieved, while in \cite{article:Gaussian_Process_Combining_3DMM_Ear} a pre-existing head model is augmented with an additional ear model.
	
	On the other hand, the registration field has seen considerable improvements in recent years, and state-of-the-art methods are capable of dealing with increasingly larger ratios of outliers, noise and missing data \cite{article:review_reg,article:review_reg_2017}. Probabilistic approaches (a subclass of registration methods) have proven to be particularly suitable to this scenario. However, being decoupled from the modelling setting, they do not allow for extensive prior information regarding the shapes. This knowledge is usually limited to the expected overall magnitude of deformations applied to the reference, controlled by a set of algorithm parameters. Instead, modelling with kernels allows for the inclusion of additional prior beliefs in a straightforward and convenient manner. Upon the development of a fitting method with a generic kernel, the replacement with a tailored one is immediate and does not call for alterations to the method itself.

	\textit{Multi-annotation} is a popular concept in the machine learning setting, where a given data point is labelled by different sources (annotators), i.e., there is not a unique and true label assignment, but several possible ones. By modelling registration as a GP multi-annotator problem, we show that it is possible to perform probabilistic registration completely within the GP framework. This unified approach benefits both from a complex prior through the kernels and from the nice properties of probabilistic assignment, particularly when dealing with outliers and noise.

	\begin{figure}[hbt!]\centering
		\begin{subfigure}[t]{0.45\textwidth}\centering
			\includegraphics[clip, trim=6cm 8cm 7cm 1cm, scale=0.5]{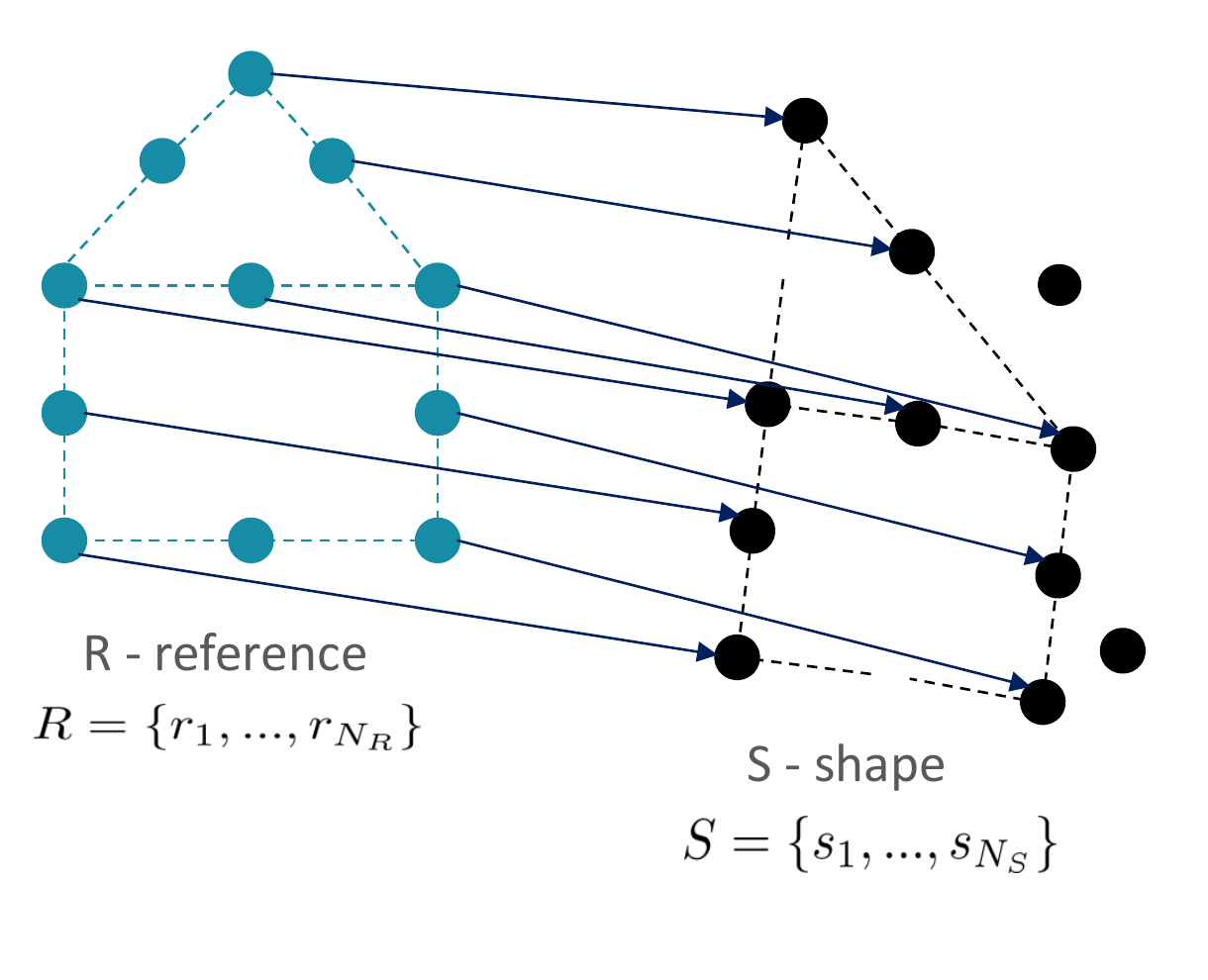}	\caption{Registration scheme}\label{subfig:scheme_reg}
		\end{subfigure}	
		\begin{subfigure}[t]{0.45\textwidth}\centering
			\includegraphics[clip, trim=6cm 1.4cm 8cm 11cm, scale=0.5]{scheme.pdf}\caption{Multi-annotator concept }\label{subfig:scheme_multi}
		\end{subfigure}	
		\caption{Schematic view of our method. The registration concept is illustrated on the left, where a reference point set $R$ is put into correspondence with a generic shape $S$. The main goal is to retrieve the deformation applied to the reference so that it resembles the target shape, as well as the matching between the two point sets. Note the existence of missing data, i.e., points found in $R$ but not in $S$, and outliers, i.e., points found in $S$ but not in $R$, introducing additional challenges to the registration procedure. On the right, the main concept behind our method is depicted. For each reference point with correspondence, $r_{Ci}$, several possible deformations are considered, referring to the different target shape points. Each deformation (label) has an associated variance $(\sigma_i^j)^2$ related to the level of confidence in that particular annotation.}
		\label{fig:scheme_full}
	\end{figure}

	\subsection{Related work}
	
	Throughout this section, we review closely related work under the two main areas of interest: registration within the GP framework and probabilistic approaches to the generic registration problem.

	\subsubsection{Registration within the GP framework}

	\paragraph*{Non-rigid registration with Iterative Closest Point}

	A proposed approach to tackle the problem registration problem/model fitting for the GP framework is a non-rigid application of the Iterative Closest Point (ICP) \cite{article:Registration_ICP}, where the transformation part is obtained through Gaussian Process Regression (GPR). This means that to each point in the reference we attribute the closest target point, based on their Euclidean distance. These correspondences are then taken as observations and GPR is used to compute deformations for the entire shape (the mean of the posterior is the reference used in the next iteration). Our approach relates to this method, in the formulation of the problem but not in the way the correspondences are retrieved. For ear shapes, given the large regions of missing data and the highly non-rigid deformations, the closest point approach leads to undesirable results \cite{article:UnsupervisedPipeline}, particularly on the bottom region, where the missing points from the posterior section cause the deformed reference to collapse.

	\paragraph*{Registration as an optimization problem}
	In another proposed approach in \cite{article:unified_approach}, the authors formulate the registration problem/model fitting for both surfaces and images. Here, we focus on the surface formulation. First, the authors do a low-rank approximation, obtaining a parametric approximation of the original kernel. The problem is then posed as a Maximum a Posteriori (MAP) estimation problem, where the likelihood expresses some distance measure between the target and reference shapes, and the prior is given by the GP. The authors chose the mean squared Euclidean distance from the reference to the closest target point and solve the problem with an L-BFGS optimizer \cite{article:L-BFGS}. Therefore, the restriction of hard-assignment when choosing correspondences is maintained in this approach, leading to similar problems as faced in the previous one.

	\subsubsection{Probabilistic registration}\label{sec:probareg}
	
	The previous approaches imply a deterministic attribution of correspondences between points, while a soft-assignment may improve robustness to noise and outliers \cite{article:review_reg}. 
	
	This leads, in the registration area, to probabilistic registration methods, of which the most used and representative is the Coherent Point Drift (CPD) \cite{article:Registration_CPD},  which considers the alignment of two sets as a probability density estimation problem. This approach takes $R$ (the reference) as a set of centroids coming from a Gaussian Mixture Model (GMM) and $S$ (any shape) as points generated by the centroids. An important detail is that the centroids are forced to move coherently as a group, thus preserving the topological structure of the points (motion coherence constraint over the velocity field). The goal is to estimate the centroid from which each point in $X$ was generated, thus resulting in a correspondence output. 
	
	While considered state-of-the-art, CPD still presents difficulties in overcoming a high incidence of outliers and missing data, as well as a different number of points between the reference and target. Consequently, variants of CPD have been developed in recent years to deal with such drawbacks by assigning different membership probabilities  \cite{article:CPD_PreservingGlobal_LocalStructures} or using  k-connected neighbours \cite{article:Registration_CPD_LocalConnectivity} to enforce the preservation of local structures. 
	Other variants are only applicable to rigid registration \cite{article:Registration_CPD_rigid_normals, article:Registration_CPD_rigid_local, article:Registration_CPD_rigid_robust} and, consequently, do not conform to our assumptions.
	
	An interesting recent work \cite{article:Registration_BCPD} proposes a Bayesian Formulation of CPD (BCPD). Under this setting, the authors guarantee convergence of the algorithm, introduce more interpretable parameters and reduce sensitivity to target rotation. Besides, this formulation is amenable to kernels beyond the Gaussian, thus presenting a close relationship to our work. In fact, we shall see that it is possible to establish a parallel between BCPD and our approach, under a given set of assumptions. Interestingly, in \cite{article:Registration_BCPD_PLUS} we see an improved version of BCPD, where GPR is used. However, note that the introduction of GPs has the single purpose of accelerating the algorithm. The point sets are initially subsampled, after which standard BCPD is conducted. The final step employs GPR to extend the retrieved deformations to the full shape.

	Finally, in \cite{article:comparison_method} the authors propose a probabilistic registration method, using a Point Distribution Model (PDM) as kernel, instead of the traditional Squared Exponential kernel used in \cite{article:Registration_CPD}. PDMs are the standard approach to retrieve statistical shape models from data in correspondence, through the application of Principal Component Analysis. They also propose the use of anisotropic Gaussian Mixture Models, oriented according to the surface normals, thus taking into account surface information. This method is closely related in the sense that it constitutes a probabilistic approach with an underlying shape modelling, thus merging soft-assignment with prior shape knowledge.
	
	\subsection{Our method}
	From the previous introduction, it is possible to conclude that proposals for registration within the GP framework target hard assignment and assume a one-to-one correspondence between a shape and reference, thus motivating their extension with a soft assignment. On the other hand, probabilistic registration methods do not consider detailed prior knowledge specific to the shape. This observation motivated us to develop Shape Fitting Gaussian Process (SFGP): a probabilistic shape fitting/registration method within the GP framework, where one can benefit both from a complex kernel prior and a soft assignment in the correspondences. A schematic view of the main idea behind our method is presented in Figure~\ref{fig:scheme_full}.

	Our main contributions are:
	\begin{itemize}
		\item \textbf{Shape registration/model fitting as a multi-annotator GPR.} We show how the problem of registration with soft assignment can be understood within the GP framework as a multi-annotator Gaussian Process Regression (Section~\ref{sec:multiannotator}).
		\item \textbf{Parallel between probabilistic registration and our method, SFGP.} We provide a parallel between BCPD and our algorithm, under a few assumptions, which allows us to benefit from the probabilistic setting (Section~\ref{sec:variance}). We further show how their differences lead to a good performance in the presence of extensive missing data (Section~\ref{sec:experiments}).
		\item \textbf{Application to a difficult registration problem -- 3D ears registration.} We show that our method is suitable for the registration of 3D point clouds with highly non-rigid deformations, high occurrence of missing data and outliers, by performing simulations with 3D point sets of human ears. The results show improvement with respect to state-of-the-art proposals (Section~\ref{sec:experiments}).
	\end{itemize}

	\section{Registration within the GP framework}
	\label{sec:multiannotator}
	
	In this section we present the formulation of our problem within the GP framework. This extends the approach in \cite{article:unified_approach} with the explicit modelling of missing data and the incorporation of multiple annotators to model probabilistic correspondences. 
	\subsection{Gaussian Processes}
	\label{sec:Background_GP_new}

	A Gaussian Process (GP) is a collection of random variables, any finite number of which have a joint Gaussian distribution. A GP $u(x)$ is fully specified by its mean $\mu(x)$ and covariance function $k(x,x')$ defined as 
	\begin{equation*}
		\begin{split}
			\mu(x) &= \mathbb{E} [u(x)]\\
			k(x,x') &=  \mathbb{E} [(u(x) - \mu(x))(u(x') - \mu(x'))]
		\end{split}
	\end{equation*} and usually written as 
	\begin{equation*}u(x) \sim \mathcal{GP}(\mu(x),k(x,x')).\label{GP_definition}
	\end{equation*} We refer to \cite{book:GP_MachineLearning} for a more thorough introduction to the theory of Gaussian Processes.
	
	\paragraph*{Outputs in higher dimensions}
	
	Gaussian Processes have initially been defined for scalar outputs, but they can be extended to the vector-case under certain assumptions so that the results obtained for the scalar case remain valid \cite{article:kernels_extension_to_vector}. In particular, a useful class of covariance functions for the vector-valued case arises from the scalar-valued covariance functions \cite{article:learning_vector_functions}. Let $A \in\mathbb{R}^{d\times d}$ be a symmetric, positive definite matrix and $l$ a real-valued covariance function. It can be shown that the matrix-valued function $K  \in\mathbb{R}^{d\times d}$ with entries $k_{ij}$ defined by $k_{ij} = A_{ij}l(x,x')$ is a valid covariance, with $A_{ij}$ representing the correlation between the $i$-th and $j$-th output component. Therefore, under the assumption that different dimensions have no correlation, it is possible to use any pre-existing scalar kernel and set $A$ as the identity matrix.

	\subsection{Problem formulation and notation}\label{subsec:prob_form} We consider that any shape $S =  \{s_1,...,s_{N_S}\}$ can be obtained from a reference shape $R = \{r_1,...,r_{N_R}\} $, where $s_i,r_i \in \mathbb{R}^d$. In particular, the shape $S$ is obtained by adding deformations $u(r)$ to the reference points,  where $u(r)$ is modelled as a Gaussian Process (GP) defined by a mean function $\mu: \mathbb{R}^d \to \mathbb{R}^d$ and a kernel $K: \mathbb{R}^d \times \mathbb{R}^d \to \mathbb{R}^{d\times d}$, and written as  $u(r) \sim GP(\mu(r),K(r,r'))$.

	Note that we are working in the case of non-scalar output, but as stated above the results obtained for scalar outputs may be applied, as long as the kernel $K$ is ensured to be valid. 
	We further assume that we can get noisy observations of the deformations 
	\begin{equation}
		\delta(r_i)= u(r_i)+\epsilon\label{eq:noisy_labels},
	\end{equation}
	where $\epsilon\sim \mathcal{N}(0,\sigma_n^2)$ and $\sigma_n^2$ is the noise variance.
	
	Under the traditional GP setting perspective, the reference points $r_i$ can be viewed as the input data, while the respective deformations $\delta(r_i)$ correspond to the labels or output variables.
	
	\paragraph*{Modelling missing data and outliers} In \cite{article:unified_approach} the authors assume a one-to-one correspondence between target shape and reference, so each point $s_i$ is attributed to a reference point as $S = \{r+u(r) \: | \: r \in R\} $. However, in challenging scenarios such as the ears, this assumption is far from true, given the large ratio of missing data and outliers found on the target shapes. Therefore, we consider the existence of reference points without correspondence in the target and vice-versa. We formulate this assumption by splitting the reference $R$ into two subsets $\{\: R_{C},\: R_{M}\}$, where the former set contains points with correspondence and the latter refers to missing data. In the same way, we split $S$ into the corresponding points and outliers as $S= \{\: S_{C},\: S_{O}\}$, such that  $S_{\textrm{C}}$ presents a one-to-one correspondence with $R_{C}$. Under these assumptions, the observed deformations can be expressed as
	\begin{equation}
		\delta=\begin{bmatrix}\delta_{C1} \\ \vdots \\ \delta_{CC}\end{bmatrix}  =\begin{bmatrix}s_{r_{C1}} - r_{C1}\\ \vdots \\ s_{r_{CC}}-r_{CC}\end{bmatrix},
		\label{eq:def}
	\end{equation}
	where $R_{C} = \{r_{C1},...,r_{CC}\}$ and $s_{r_{Ci}}$ is the shape point corresponding to the reference point $r_{Ci}$. A list of the notation used from this section onwards can be found in Table~\ref{tab:Notation}.

	\begin{table}[]\centering
		\begin{tabular}{p{6.5cm}p{8cm}}
			\textbf{Variable}                                & \textbf{Description}                                                                                                          \\ \hline
			$S =  \{s_1,...,s_{N_S}\} \subset \mathbb{R}^d $ & The target shape point set, with its respective vector representation $s =  (s_1^T,...,s_{N_S}^T)^T \in \mathbb{R}^{N_Sd} $   \\
			$R =  \{r_1,...,r_{N_R}\} \subset \mathbb{R}^d $ & The reference point set, with its respective vector representation $r =  (r_1^T,...,r_{N_R}^T)^T \in \mathbb{R}^{N_Rd} $ \\
			$R_{C} = \{r_{C1},...,r_{CC}\}$ & Set of reference points with correspondence to target, where $C$ is the number of points with correspondence  \\
			$S_{C} = \{s_{r_{C1}},...,s_{r_{CC}}\}$ & Set of shape points with correspondence to the reference, where $s_{r_{Ci}}$ is the target point corresponding to the reference point $r_{Ci}$          \\
			$\Delta= \{\delta_{C1}, \dots ,\delta_{CC} \}\subset \mathbb{R}^d $ & The deformations for each reference point with correspondence, with its respective vector representation $\delta =  (\delta_{C1}^T, \dots ,\delta_{CC}^T)^T \in \mathbb{R}^{N_Rd} $ \\
			$D_{\sigma^2_n} = \textrm{diag}(\sigma^2_{C1},\dots, \sigma^2_{CC})$  & Diagonal matrix of observation noise, where $\sigma^2_{C_i}$ is the variance of noise for observed deformation $\delta_{Ci}$ \\
			$\mathcal{C}_i = \{j : s_j \in S , p_{i,j}>P_{MIN} \}$  & Set of indices of target shape with correspondence with reference point $r_i$ \\
			$K = K_{RR} = [k(r_i,r_i) ]^{N_R}_{i=1} \in \mathbb{R}^{N_R d\times N_R d}$  & The kernel matrix of the entire reference point set, i.e. containing all the points in $R$ \\
			$K_{R_{C}R_{C}} = [k(r_{Ci},r_{Ci}) ]^{C}_{i=1} \in\mathbb{R}^{C d\times C d}$  & The kernel matrix of the reference points with correspondence \\	
			$I_d$  & The identity matrix of size $d$ \\
			$\omega$  & Outlier probability \\
			$D_{\varsigma^2} = \textrm{diag}(\varsigma^2_{1},\dots, \varsigma^2_{N_R})$  & The diagonal matrix of registration noise, where $\varsigma^2_{i}$ is the variance associated to reference point $r_i$ \\
			$\tilde{A} = A \otimes I_d$  & The Kronecker product of matrix $A$ with $I_d$ \\
			$1_N$  & The vector of ones with size $N$ \\
		\end{tabular}\caption{Notation table.}\label{tab:Notation}
	\end{table}

	\subsubsection{Registration problem}
	Within this framework, and because of the gaussianity of the distributions, for which the mode and mean coincide, the shape fitting and registration problem can be formulated as a MAP problem \cite{article:Gaussian_Process_2017_Vetter}, that is
	\begin{equation*}
		\max_u p\big(u \: | \: R,S\big).
	\end{equation*}
	In particular, the dependence is only on the observed points and deformations (not over the entire reference shape), so that 
	\begin{equation*}
		\max_u p\big(u \: | \: R_{C},\delta\big).
	\end{equation*}
	However, the correspondences are not known beforehand and they depend on the deformations themselves, leading to
	\begin{equation}
		\argmax_u p\big(u \: |\: R_{C}(u),\delta(u)\big).
	\end{equation}
	In an ICP-like approach, we split our problem into two, where $u$ is kept fixed in the inner maximization and $R_{C},\: \delta$ are kept fixed in the outer optimization. Consequently, the final formulation of the fitting problem is written as 
	\begin{equation}
		\max_u \Bigg\{\max_{R_{C},\delta} p\big(u \: |\: R_{C}(u),\delta(u)\big)\Bigg\}.\label{eq:Reg_Problem}
	\end{equation}
	In the outer problem, the current correspondence is used to estimate the transformations on the reference. Conversely, the inner maximization computes likely correspondences, given the currently transformed reference shape.
	
	\section{Formulation with multi-annotators}
	
	Given the conceptual formulation in \eqref{eq:Reg_Problem}, it is necessary to define the correspondence procedure to effectively solve the optimization. In Section~\ref{sec:corr_hard} we consider a hard-assignment, that is then extended through multi-annotators to a probabilistic one, in Section~\ref{sec:corr_soft}.
	
	\subsection{Correspondence with hard-assignment}\label{sec:corr_hard}
	
	Let us first consider the case where there is single correspondence for the reference inliers. Regardless of the method used to establish a matching, the output will be a vector of observed deformations $\delta$ with the same size as $R_{C}$, as expressed in \eqref{eq:def}. By \eqref{eq:Reg_Problem} and for fixed $R_{C}$ and $\delta$, we are interested in the MAP of the GP posterior. In particular, given a set of observed inputs $R_C$ and outputs $\delta$, the goal is to find the most likely deformations for unobserved points $R_M$. This corresponds to noisy Gaussian Process Regression (GPR), where $u$ is the GP and $R_C, \: \delta$ are the training dataset. In GPR, a Bayesian approach is followed to retrieve the predictive equations for test points (in this case $R_C$). Applied to our setting and for the full shape input $R$, these equations correspond to \cite{book:GP_MachineLearning}
	\begin{equation*}u_*| R_C,\delta, R \sim \mathcal{N}(\mu_p,\Sigma_P),
	\end{equation*}where $\mu_p\in \mathbb{R}^{N_Rd}$ and $\Sigma_p \in \mathbb{R}^{N_Rd \times N_Rd}$ are given as
	\begin{equation}
		\begin{split}
			\mu_p &= K_{R_{C}R}^T \big(K_{R_{C}R_{C}}+ \sigma^2_nI_{Cd} \big)^{-1}\delta\\
			\Sigma_p &= K_{RR}-  K_{R_{C}R}^T\big(K_{R_{C}R_{C}}+\sigma^2_nI_{Cd}\big)^{-1} K_{R_{C}R},
		\end{split}
		\label{eq:GPR_shape}
	\end{equation}
	where $K_{R_{C}R_{C}} = \big[k(r_{Ci},r_{Ci}) \big]^{C}_{i=1} \in \mathbb{R}^{Cd \times Cd}$ is the kernel matrix of the observed points, $K_{RR_C} = \big[k(r_j,r_{Ci}) \big]^{N_R,C}_{j,i=1} \in \mathbb{R}^{N_R\times C}$ is the kernel matrix of the predicted and observed points, $K_{R_CR} =K_{RR_C}^T  $ its transpose and $I_{Cd}$ is the identity matrix of size $Cd$. The deformed reference is then obtained as $\bar{r} = r + \mu_p $, to be used in the inner iteration for computation of the correspondences (e.g. by taking the closest point).
	
	\textbf{Remark on the kernel matrices}. Note that throughout the fitting procedure, only the points contained in $R$ are taken into account. Thus, we lie in the discrete setting and the prediction outputs $\mu_p$ and $\Sigma_p$ can be written as a vector and matrix, instead of functions. In the same way, $K_{RR}$, the kernel matrix of all reference points, is constant and all other kernel matrices ($K_{R_{C}R_{C}}, K_{R_{C}R}$) are subsets of $K_{RR}$. For this reason, $K_{RR}$ is simply denoted as $K$. Despite this observation, we note that the formulation with GPs (instead of its discrete counterpart, i.e., the multivariate normal distribution) is relevant, as the extension to a continuous surface is desired at later steps of the pipeline.

	\subsection{Introducing soft-assignment}\label{sec:corr_soft}
	Introducing a soft-assignment equates to attributing different possible target points to a reference point, each with a different probability of correspondence. In particular, we assume that a reference point $r_i$ has $N_S$ possible deformations, corresponding to a match with each target point. In the traditional machine learning setting, this idea can be expressed as multi-annotation. The concept refers to the multiple labellings of the same data point when there is no exact ground truth available (e.g. the attribution of a label is subjective). In this situation, a common approach is to obtain a collection of labels for each data point, provided by different annotators, with (possibly) different levels of confidence. The data points correspond to the reference points in $R$, while the labels correspond to the different possible deformations with respect to the target shape points. For a visual representation of the described model, we refer the reader to the respective graphical model in Figure~\ref{fig:graphical_model}.

	Therefore, we consider ${\delta}_{Ci}^j$ as the deformation between the reference point $r_{Ci}$ and the shape point $s_j$, with an associated variance $(\sigma_{Ci}^j)^2$. The final variance and deformation of point $r_i$ is then retrieved by weighting all of the $j$ contributions. By appropriately defining this relationship we are able to obtain predictive equations that follow the same structure as the GPR in \eqref{eq:GPR_shape}. Following \cite{article:GP_MultiLabel_Annotators}, we define these two quantities as 
	\begin{equation}
		\frac{1}{\sigma^2_{Ci}} = \sum_{j \in \mathcal{C}_i}  \frac{1}{(\sigma_{Ci}^j)^2} \quad,\quad 
		\hat{\delta}_{Ci} = \sigma^2_{Ci} \sum_{j \in \mathcal{C}_i} \frac{{\delta}_{Ci}^j}{(\sigma_{Ci}^j)^2}, \label{eq:updates_multi}
	\end{equation}
	where $\delta_{Ci}^j = s_j -\bar{r}_{Ci}$ . The variance $(\sigma_{Ci}^j)^2$, corresponding to the annotator's confidence in the original formulation, represents here the probability of correspondence between point $C_i$ and $s_j$. Note that while the variance of annotator $j$ is often considered the same across the data points $i$, here we consider that annotator $j$ of point $C_i$ is not necessarily the same as annotator $j$ of point $C_k$, i.e. there can be at most $N_R \times N_S$ annotators.

	The predictive equations can readily be obtained by assuming annotators provide the labels independently from each other, leading to the likelihood as 
	\begin{equation*}
		p(\delta|u) = \prod_{i=1}^{N_R}\prod_{j \in \mathcal{C}_i}\mathcal{N} \big(\delta_{Ci}^j|u_i,(\sigma_{Ci}^j)^2 \big).
	\end{equation*} Considering this expression and the prior on $u$, the predictive equations for an input $R$ follow  $\mathcal{N} (\hat{\mu}_p,\hat{\Sigma}_p)$, with mean and covariance \cite{article:GP_MultiLabel_Annotators}
	\begin{equation*}
		\begin{split}
			\hat{\mu}_p &=  K_{R_{C}R}^T(K_{R_{C}R_{C}}^{-1}+\tilde{D}_{\sigma^2_n})^{-1} \tilde{D}_{\sigma^2_n}\hat{\delta}\\
			\hat{\Sigma}_p &= K_{RR} - K_{R_{C}R}^T(K^{-1}_{R_{C}R_{C}}+\tilde{D}_{\sigma^2_n})^{-1} K_{R_{C}R},
		\end{split}
	\end{equation*}where $D_{\sigma^2_n} =\textrm{diag}(\sigma^2_{C1},\dots, \sigma^2_{CC})$, $\tilde{D}_{\sigma^2_n} =D_{\sigma^2_n} \otimes I_d$  and $\hat{\delta} = [\hat{\delta}_{C1},\dots,\hat{\delta}_{CC}]^T$, where $\sigma^2_{Ci}$ and $\hat{\delta}_{Ci}$ are given by \eqref{eq:updates_multi}. The kernel matrices are defined in \eqref{eq:GPR_shape}. We can see that these equations differ from \eqref{eq:GPR_shape} only on the noise covariance matrix and label vector, but follow exactly the same structure.
	
	In order to complete our algorithm, two points need to be addressed: how to define the correspondence set $R_{C}$ and how to compute $(\sigma_{Ci}^j)^2$. The computation of this value will be detailed in the next section.

	\begin{figure}
		\centering
		\begin{tikzpicture}
			\tikzset{vertex/.style = {shape=circle,draw,minimum size=2em}}
			\tikzset{vertex_new/.style = {shape=circle,draw,minimum size=0.6em}}
			\tikzset{vertex_obs/.style = {shape=rectangle,draw,minimum size=2em}}
			\tikzset{vertex_text/.style = {draw=white!80,minimum size=2em}}
			\tikzset{group/.style = {shape=rectangle,draw,minimum height=7.3em, minimum width=9.2em}}
			\tikzset{edge/.style = {->,> = latex'}}
			\node[vertex_obs,fill=blue!20] (x1) at  (0,0) {$r_{C1}$};
			\node[vertex] (f1) at  (0,2) {$u_{C1}$};
			\node[vertex_new] (fnew) at  (1,3.9) {$\sigma_{C1}^j$};
			\node[vertex_obs] (y1) at  (0,5) {$\delta_{C1}^j$};
			\node[group,text height=0.5em] (y1group) at  (0,4.5) {};
			\node[above right,text height=0.5em] at (y1group.south west) {$\mathcal{C}_1$};
			\node[vertex_text] (fdots) at  (2,2) {$\dots$};
			\node[vertex_obs,fill=blue!20] (x2) at  (4,0) {$r_{CC}$};
			\node[vertex] (f2) at  (4,2) {$u_{CC}$};
			\node[vertex_obs] (y2) at  (4,5) {$\delta_{CC}^j$};
			\node[vertex_new] (fnew2) at  (5,3.9) {$\sigma_{CC}^j$};
			\node[group,text height=0.5em] (y2group) at  (4,4.5) {};
			\node[above right,text height=0.5em] at (y2group.south west) {$\mathcal{C}_C$};
			\node[vertex_obs,fill=orange!10] (xM1) at  (7,0) {$r_{M1}$};
			\node[vertex] (fM1) at  (7,2) {$u_{M1}$};
			\node[vertex] (yM1) at  (7,4) {$y_{M1}$};
			\node[vertex_obs,fill=orange!10] (xM2) at  (11,0) {$r_{MM}$};
			\node[vertex] (fM2) at  (11,2) {$u_{MM}$};
			\node[vertex] (yM2) at  (11,4) {$y_{MM}$};
			\node[vertex_text] (fdots2) at  (9,2) {$\dots$};
			\draw[edge] (x1) to node [font=\small,left]{} (f1);
			\draw[edge] (f1) to node [font=\small,right]{} (y1);
			\draw[edge] (fnew) to node [font=\small,right]{} (y1);
			\draw[edge] (fnew2) to node [font=\small,right]{} (y2);
			\draw[edge] (x2) to node [font=\small,left]{} (f2);
			\draw[edge] (f2) to node [font=\small,right]{} (y2);
			\draw[edge] (xM1) to node [font=\small,left]{} (fM1);
			\draw[edge] (fM1) to node [font=\small,left]{} (yM1);
			\draw[edge] (xM2) to node [font=\small,left]{} (fM2);
			\draw[edge] (fM2) to node [font=\small,left]{} (yM2);
			\draw[very thick] (f1) edge[-] (fdots);
			\draw[very thick] (fdots) edge[-] (f2);
			\draw[very thick] (f2) edge[-] (fM1);
			\draw[very thick] (fM1) edge[-] (fdots2);
			\draw[very thick] (fdots2) edge[-] (fM2);
		\end{tikzpicture}
		\caption{Graphical model for the GP with multiple annotators in the context of shape modelling. The model follows the notation proposed in \cite{book:GP_MachineLearning} for GP graphical models, where the horizontal bold line represents a set of fully connected nodes. Squared nodes represent measurements (empty squares) or constants (filled squares) and circles are latent ones. Template points with correspondence are identified with blue, while missing points are identified with orange to facilitate visual recognition. The plates represent the repetition of contained variables and, in this case, refer to the multiple annotations $j$ of a given reference point. }\label{fig:graphical_model}
	\end{figure}
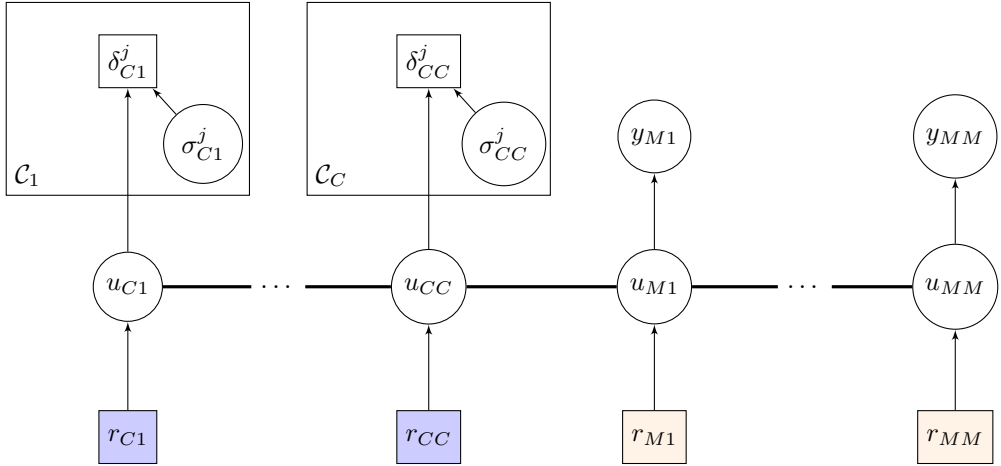
	
	\section{Computation of variance}
	\label{sec:variance}
	
	In order to obtain a theoretically sound update for elements $\sigma_{Ci}^j$ in \eqref{eq:updates_multi} we reformulate our problem in the standard probabilistic approach. This framework is first introduced in Section~\ref{sec:proba_intro} and then applied to our setting in Section~\ref{sec:proba_our}. In Section~\ref{sec:proba_VBI} we derive the parameter estimation for the previous formulation. Finally, in Section~\ref{sec:proba_VBI} we establish a parallel between the GP formulation in Section~\ref{sec:corr_soft} and the probabilistic approach presented in this section. We follow the Bayesian formulation proposed in \cite{article:Registration_BCPD}, instead of the original in \cite{article:Registration_CPD}, as the Bayesian setting has a strong connection to the Gaussian Processes. Throughout this section, we keep the notation introduced in Section~\ref{sec:multiannotator} whenever the variables refer to the exact same elements.
	
	\subsection{Probabilistic shape registration}\label{sec:proba_intro}

	Under the probabilistic formulation, a reference shape ($R$), upon an appropriate transformation, can be seen as a set of centroids of a Gaussian Mixture Model (GMM), where the target points of any shape ($S$) correspond to data generated by the centroids. Further, a point $s_j$ can be an outlier with probability $\omega$, in which case it is generated from an outlier probability distribution $p_{out}(s_j)$. If $s_j$ is not an outlier, then it corresponds to a point $r_i$ with probability $\alpha_i$ (membership probability).
	
	In \cite{article:Registration_BCPD} the authors consider explicit similarity transformations and non-rigid ones, such that a point $i$ of the transformed reference is given as 
	\begin{equation}
		\mathcal{T}(r_i) = \beta\Gamma(r_i+v_i) + \eta, \label{eq:bcpd_transformation_model}
	\end{equation}
	where $r_i$ is the original reference point, $\beta$ is a scale factor, $\Gamma$ is a rotation matrix, $\eta$ is a translation vector and $v_i$ is a displacement vector for non-rigid transformations. We use $\rho = (\beta,\Gamma,\eta)$ to denote the set of similarity transformations. According to GMM, the generation of a target point $s_j$ follows a multivariate normal distribution with mean $\mathcal{T}(r_i)$ -- the transformed reference point -- and covariance matrix $\varsigma^2 I_d$, so the probability distribution to generate $s_j$ starting from $r_i$ is
	\begin{equation} \phi_{ij}(s_j;\mathcal{T}(r_i) ,\varsigma^2)= \frac{1}{(\varsigma \sqrt{2\pi})^d} \exp{ \Big(  -\frac{\|s_j - \mathcal{T}(r_i) \|^2}{2\varsigma^2}  \Big)}.  \label{eq:bcpd_GMM}
	\end{equation}
	In order to explicitly introduce correspondences, two aditional variables are added: $c \in \{0,1\}^{N_S}$, an indicator variable that takes value of $1$ for $c_j$ if point $s_j$ is an outlier, and $e \in \{1,\dots, N_R\}^{N_S}$, where $e_j = i$ if the $j$-th target point corresponds to the $i$-th reference point. Taking the outlier assumptions into account, we obtain the joint distribution for $(s_j,e_j,c_j)$ as
	\begin{equation}
		p(s_j,e_j,c_j|R,\varsigma^2,v, \rho) =\{w p_{out}(s_j)\}^{1-c_j} \Big\{(1-w) \prod_{i=1}^{N_R}(\alpha_i\phi_{ij})^{\gamma_i(e_j)} \Big\}^{c_j}, \label{eq:bcpd_joint_target_dist}
	\end{equation}
	where $\gamma_i$ is an indicator function, taking a value of $1$ if $e_j = i$ and $0$ otherwise, and $\alpha_i$ is the probability that $e_j = i$, with $\sum_{i=1}^{N_R} \alpha_i = 1$. The authors take $p(\alpha)$ as a Dirichlet distribution and set a prior on the deformations as 
	\begin{equation*}
		p(v|R) = \phi(\delta;0,\lambda^{-1} G \otimes I_d),
	\end{equation*} 
	where $G = (g_{ii'}) \in \mathbb{R}^{N_R \times N_R} $, with $g_{ii'} = k(r_i,r_i')$ and $k(x,x')$ a kernel function; $\lambda$ is a positive constant.
	
	Finally, the full joint is given as 
	\begin{equation}
		p(S,T,\theta) \propto p(\delta|R)p(\alpha)\prod_{j=1}^{N_S} p(s_j,e_j,c_j|R,v,\varsigma^2,\alpha, \rho),  \label{eq:bcpd_full_joint}
	\end{equation}
	where $\theta=(v,\varsigma^2,\alpha,\rho,c,e)$.
	
	\subsection{Formulation}\label{sec:proba_our}
	We shall briefly detail the assumptions and problem formulation used in this approach, by deriving equivalents of equation \eqref{eq:bcpd_transformation_model} through \eqref{eq:bcpd_full_joint} according to our assumptions. This will lead to the final expression for the joint distribution in \eqref{eq:joint_distribution}.

	\paragraph{Transformation model}
	We do not consider similarity transformations, so the transformation acting on the reference is merely given by the displacement vector $v_i$
	\begin{equation}
		\mathcal{T}_i = r_i + v_i.
	\end{equation}
	\paragraph{Gaussian mixture model}
	According to the GMM we obtain a similar expression to \eqref{eq:bcpd_GMM}
	\begin{equation} \phi_{ij}(s_j;\mathcal{T}(r_i),\varsigma_i^2)= \frac{1}{(\varsigma_i \sqrt{2\pi})^d} \exp{ \Big(  -\frac{\|s_j - \mathcal{T}(r_i)\|^2}{2\varsigma_i^2}  \Big)},
	\end{equation} but we introduce an individual variance for each reference point given as $\varsigma_i^2$. In Section~\ref{sec:experiments} it will become clear that this has a positive impact when dealing with large regions of missing data. Regarding the outliers, we follow the same assumptions as in BCPD, but for simplification, we take $p_{out}(s_j) = 1/N_S$, as was previously taken in CPD. Besides, we take equal membership probabilities $\alpha_i = 1/N_R$, meaning that a point in the target is expected to be associated with any point of the reference with equal probability. This is also the assumption in CPD and is here taken for simplification on this first formulation of the framework. Thus, we obtain the joint distribution
	\begin{equation}
		p(s_j,e_j,c_j|R,v,\varsigma^2) =\Big\{\frac{w}{N_S} \Big\}^{1-c_j} \Big\{\frac{(1-w)}{N_R} \prod_{i=1}^{N_R}(\phi_{ij})^{\gamma_i(e_j)} \Big\}^{c_j}.
	\end{equation}
	\paragraph{Prior distributions}
	The prior on deformations is expressed with the previously defined kernel $k(r,r')$, except that $\lambda$ is taken as $1$, since it can be included within the kernel. Therefore, $\lambda^{-1} G \otimes I_d$ can simply be denoted as $K(r,r')$, as defined in \ref{subsec:prob_form} and the prior is given as $p(v|r) = \mathcal{N}(0,K)$.
	
	\subsubsection{Full joint distribution}
	Finally, the full joint distribution is obtained as
	\begin{equation}
		p(s,r,\theta) \propto p(v|r)\prod_{j=1}^{N_S} p(s_j,e_j,c_j|r,v,\varsigma^2),
		\label{eq:joint_distribution}
	\end{equation}
	where $\theta=(v,\varsigma^2,c,e)$ are the parameters to be estimated.
	
	\subsection{Solving the problem with Variational Bayesian Inference}\label{sec:proba_VBI}
	In order to estimate the parameters $\theta$ in \eqref{eq:joint_distribution}, we resort to Variational Bayesian Inference (VBI) \cite{article:VBI,article:VBI_2}, a useful tool when dealing with challenging posterior distributions. We start by providing an overview of VBI and its formulation for out problem, followed by the update equations obtained through this approach.
	
	\subsubsection{Background}
	The idea behind VBI is to use a distribution $q(\theta)$ to approximate the true posterior $p(\theta|S,R)$, where the closeness between the two distributions is measured by the Kullback–Leibler (KL) divergence. Therefore, the goal is to minimize the KL divergence between $q$ and $p$, i.e.
	\begin{equation*}\begin{split}
			q^*(\theta) &= \argmin_{q(\theta \in Q)}  \quad KL(q(\theta)  \enskip || \enskip p(\theta|S,R) )\\
			&= \argmin_{q(\theta \in Q)}  \mathbb{E}[\log q(\theta)] + \mathbb{E}[\log p(\theta,R,S)] +\log p(S,R), \\
	\end{split}\end{equation*} where $Q$ is a predefined set of distribution families to which $q$ belongs. However, since $\log p(S,R)$ may not be computable, the evidence lower bound (ELBO) is maximized instead 
	\begin{equation*}ELBO(q) = \mathbb{E}[\log p(\theta,R,S)] - \mathbb{E}[\log q(\theta)].
	\end{equation*}The ELBO is equivalent to the negative KL divergence up to a constant, and therefore maximizing the former is equivalent to minimizing the latter. The choice of a suitable form for $Q$ is fundamental, as it should ideally lead to a sufficiently simple ELBO, while being flexible enough to provide a good approximation to the original posterior distribution. 
	
	Here, we assume that $q$ has a strong separation form, i.e. the latent variables are mutually independent and governed by different factors. This leads to $q(\theta) = \prod_{i=1}^M q_i(\theta_i)$, where $q_i(\theta_i)$ is the distribution for the variable $\theta_i$. In particular, we consider
	\begin{equation*}
		q(\theta) = q_1(v)q_2(c,e)q_{31}(\varsigma_1^2)\dots  q_{3i}(\varsigma_i^2) \dots q_{3N_R}(\varsigma_{N_R}^2).
	\end{equation*} A standard method to maximize the ELBO, and the one followed here, is the Coordinate Ascent Variational Inference. If we fix all other $q_j$, then we know that the optimal $q_i$ is 
	\begin{equation}
		q_i(\theta_i)^* \propto \exp \{ \mathbb{E}_{-i} [\log p(\theta_i | \theta_{-i},S,R ) ] \}  \propto \exp \{ \mathbb{E}_{-i} [\log p(\theta_i , \theta_{-i},S,R ) ] \},
	\end{equation} where $\mathbb{E}_{-i} [\log p(\theta,S,R ) ]$ is the expectation of the joint probability with respect to the remaining $q_{j\neq i}$ and $\theta_{-i}$ corresponds to all parameters in $\theta$ except $\theta_i$. Hence, each $q_i$ is updated iteratively by computing  $\mathbb{E}_{-i} [\log p(\theta,S,R ) ]$, until convergence is reached.

	\subsubsection{Update equations} 
	\label{sec:update_equations}
	
	We present the updates for each component of $q(\theta)$ in Proposition~\ref{prop:updates_q1}, Proposition~\ref{prop:updates_q2} and Proposition~\ref{prop:updates_q3}. The proofs follow \cite{article:Registration_BCPD, article:VBI} and can be found in Supplementary Material. The final equations exhibit a similar structure to those in \cite{article:Registration_BCPD}, except that $\varsigma^2$, taken as a scalar in \cite{article:Registration_BCPD}, is replaced by the diagonal matrix $D_{\varsigma^2}$. 
	
	For ease of notation, and in preparation for the subsequent equations, we define $p_{ij} = \mathbb{E}[c_j\gamma_i(e_j)]$ as the probability of correspondence between reference point $i$ and target point $j$, with the respective probability matrix $P = [p_{ij}]^{N_R,N_S}_{i,j=1} \in [0,1]^{N_R \times N_S}$. We further define $\nu_i =  \sum_{j=1}^{N_S}  p_{ij} $, representing the expected number of target points corresponding with $r_i$, as well as $\nu =P 1_{N_S}$, the corresponding vector.
	
	\begin{prop} The deformations $v$ follow a normal distribution $\mathcal{N}(\mu_v,\Sigma_v)$, with the update equations for mean and covariance given as
		\begin{equation}\begin{split}
				\mu_v &=  \Sigma_v\tilde{D}_\nu \tilde{D}_{\varsigma^2}^{-1} (\tilde{D}_{\nu}^{-1}\tilde{P}s-r) \\
				\Sigma_v &=  (K^{-1} + \tilde{D}_\nu \tilde{D}_{\varsigma^2}^{-1} )^{-1},
			\end{split}\label{eq:BCPD_update_def}\end{equation}
		where $D_{\varsigma^2} = \textup{diag}(\varsigma^2_{1},\dots, \varsigma^2_{N_R})$ and $D_{\nu} = \textup{diag}(\nu_{1},\dots, \nu_{N_R})$. 
		\label{prop:updates_q1}
	\end{prop}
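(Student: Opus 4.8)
The plan is to apply the Coordinate Ascent Variational Inference update to the factor $q_1(v)$ and to recognize the resulting log-density as that of a Gaussian. By the update rule stated in the excerpt, $\log q_1(v)^* = \mathbb{E}_{-v}\big[\log p(s,r,\theta)\big] + \textrm{const}$, where the expectation is taken over the remaining factors $q_2(c,e)$ and the $q_{3i}(\varsigma_i^2)$. Starting from the full joint \eqref{eq:joint_distribution}, I would first isolate the terms that depend on $v$. Only two pieces contribute: the prior $\log p(v|r) = -\tfrac{1}{2}v^T K^{-1} v + \textrm{const}$, since $p(v|r)=\mathcal{N}(0,K)$; and the Gaussian-mixture terms $\log\phi_{ij}$, whose $v$-dependent part is $-\|s_j - r_i - v_i\|^2/(2\varsigma_i^2)$. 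All other factors ($\tfrac{w}{N_S}$, $\tfrac{1-w}{N_R}$, the normalising constants of $\phi_{ij}$) are absorbed into the constant.

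Next I would carry out the expectation. The indicator structure of the joint means each $\log\phi_{ij}$ enters weighted by $c_j\gamma_i(e_j)$; taking the expectation over $q_2(c,e)$ replaces this weight by $p_{ij}=\mathbb{E}[c_j\gamma_i(e_j)]$, while the expectation over $q_{3i}$ replaces $1/\varsigma_i^2$ by its variational moment (kept under the same symbol, as in the proposition). This yields
\[
\log q_1(v)^* = -\tfrac{1}{2}v^T K^{-1} v - \sum_{i=1}^{N_R}\sum_{j=1}^{N_S}\frac{p_{ij}}{2\varsigma_i^2}\|s_j - r_i - v_i\|^2 + \textrm{const}.
\]
Expanding $\|s_j-r_i-v_i\|^2 = v_i^T v_i - 2 v_i^T(s_j-r_i) + \textrm{const}$ separates a quadratic and a linear contribution in each $v_i$. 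Summing the quadratic part over $j$ produces the coefficient $\nu_i=\sum_j p_{ij}$, and the linear part produces $\sum_j p_{ij}s_j=(\tilde{P}s)_i$.

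Then I would assemble everything into matrix form using the Kronecker-lifted diagonals $\tilde{D}_\nu$ and $\tilde{D}_{\varsigma^2}$. The quadratic term becomes $-\tfrac{1}{2}v^T\tilde{D}_\nu\tilde{D}_{\varsigma^2}^{-1}v$, and after factoring $\nu_i$ out of the linear part the cross term becomes $v^T\tilde{D}_\nu\tilde{D}_{\varsigma^2}^{-1}(\tilde{D}_\nu^{-1}\tilde{P}s-r)$. Collecting,
\[
\log q_1(v)^* = -\tfrac{1}{2}v^T\big(K^{-1}+\tilde{D}_\nu\tilde{D}_{\varsigma^2}^{-1}\big)v + v^T\tilde{D}_\nu\tilde{D}_{\varsigma^2}^{-1}\big(\tilde{D}_\nu^{-1}\tilde{P}s-r\big) + \textrm{const},
\]
which is the log-density of a Gaussian in $v$. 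Completing the square identifies the covariance $\Sigma_v=(K^{-1}+\tilde{D}_\nu\tilde{D}_{\varsigma^2}^{-1})^{-1}$ and the mean $\mu_v=\Sigma_v\tilde{D}_\nu\tilde{D}_{\varsigma^2}^{-1}(\tilde{D}_\nu^{-1}\tilde{P}s-r)$, as claimed.

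I expect the main obstacle to be the bookkeeping in the expectation and the factoring step, rather than any deep difficulty: one must correctly turn the double sum over $i,j$ into Kronecker-product notation and, in particular, extract the factor $\nu_i$ cleanly so that the membership-weighted target centroid appears as $\tilde{D}_\nu^{-1}\tilde{P}s$. The remaining work — expanding the norm and completing the square — is routine Gaussian algebra, and the Gaussianity of $q_1(v)$ is automatic once the exponent is seen to be quadratic in $v$.
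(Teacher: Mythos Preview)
Your proposal is correct and follows essentially the same route as the paper's own derivation: apply the CAVI update $\log q_1(v)^* = \mathbb{E}_{-v}[\log p(s,r,\theta)] + \textrm{const}$, reduce the expectation over $q_2(c,e)$ to the weights $p_{ij}$, expand the quadratic in $v$, and read off the Gaussian precision $K^{-1}+\tilde{D}_\nu\tilde{D}_{\varsigma^2}^{-1}$ and mean $\Sigma_v\tilde{D}_\nu\tilde{D}_{\varsigma^2}^{-1}(\tilde{D}_\nu^{-1}\tilde{P}s-r)$. The paper explicitly states that its proof follows the BCPD/VBI derivations and defers it to supplementary material, so there is no methodological difference to report.
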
 
	\begin{prop} The update for the correspondence probability is
		\begin{equation}\label{eq:proba_corr}
			p_{ij} =  \frac{(1-w)\langle\phi_{ij}\rangle}{\frac{N_R}{N_S} w  + (1-w) \sum_{i'=1}^{N_R} \langle\phi_{i'j}\rangle},
		\end{equation}
		where $	\langle\phi_{ij}\rangle =\phi_{ij}(s_j; \mathcal{T}(r_i),\varsigma_i^2) \exp\left\{-\frac{\textup{Tr}(	\Sigma_v^i)}{2\varsigma_i^2} \right\} $, $ \mathcal{T}(r_i)= r+ \mu_v$, $\Sigma_v^i$ is the submatrix of $\Sigma_v$ related to the $v_i$ component and $\textup{Tr}(\cdot)$ is the trace of the matrix.
		\label{prop:updates_q2}
	\end{prop}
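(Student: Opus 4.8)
The proposition gives the variational update for the correspondence variables $(c,e)$, yielding the responsibility $p_{ij}=\mathbb{E}[c_j\gamma_i(e_j)]$. I need to derive the optimal $q_2(c,e)^*$ via coordinate ascent and then read off $p_{ij}$.

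**The approach**

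The plan is to apply the coordinate ascent formula $q_i(\theta_i)^*\propto\exp\{\mathbb{E}_{-i}[\log p(\theta,S,R)]\}$ to the factor $q_2(c,e)$, using the full joint distribution from equation (joint_distribution). First I would isolate all terms in $\log p(s,r,\theta)$ that depend on $(c,e)$: these come only from the per-point factors $p(s_j,e_j,c_j|r,v,\varsigma^2)$, while the prior $p(v|r)$ drops out as a constant with respect to $(c,e)$. Taking the log of that factor splits into an outlier branch (exponent $1-c_j$) and an inlier branch (exponent $c_j$), so $\log p$ contributes $(1-c_j)\log\frac{w}{N_S}+c_j\big[\log\frac{1-w}{N_R}+\sum_i\gamma_i(e_j)\log\phi_{ij}\big]$ for each $j$. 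Since the target points are independent, $q_2$ factorises over $j$, and for each $j$ the variable $(c_j,e_j)$ ranges over the outlier state and the $N_R$ inlier states.

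**Handling the expectation over $v$ and $\varsigma^2$**

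The key computation is $\mathbb{E}_{-2}[\log\phi_{ij}]$, where the expectation is over $q_1(v)$ and the $q_{3i}(\varsigma_i^2)$. The term $\log\phi_{ij}$ contains $-\tfrac{1}{2\varsigma_i^2}\|s_j-\mathcal{T}(r_i)\|^2$ with $\mathcal{T}(r_i)=r_i+v_i$. Expanding the squared norm under $\mathbb{E}_{q_1}$, with $v_i\sim\mathcal{N}(\mu_{v}^i,\Sigma_v^i)$, produces $\|s_j-(r_i+\mu_v^i)\|^2+\mathrm{Tr}(\Sigma_v^i)$; this is where the trace correction $\mathrm{Tr}(\Sigma_v^i)$ appears. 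I would then fold the deterministic-$\varsigma^2$ evaluation into the compact bracket $\langle\phi_{ij}\rangle=\phi_{ij}(s_j;\mathcal{T}(r_i),\varsigma_i^2)\exp\{-\mathrm{Tr}(\Sigma_v^i)/(2\varsigma_i^2)\}$ stated in the proposition, treating $\varsigma_i^2$ and $\mathcal{T}(r_i)=r+\mu_v$ as their current variational estimates. Exponentiating, the unnormalised weight on inlier state $e_j=i$ is $\tfrac{1-w}{N_R}\langle\phi_{ij}\rangle$ and on the outlier state is $\tfrac{w}{N_S}$.

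**Normalisation and reading off $p_{ij}$**

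Finally I would normalise the categorical distribution over the $N_R+1$ states for each $j$: the partition function is $\tfrac{w}{N_S}+\tfrac{1-w}{N_R}\sum_{i'}\langle\phi_{i'j}\rangle$. The responsibility is $p_{ij}=q_2(c_j=1,e_j=i)=\mathbb{E}[c_j\gamma_i(e_j)]$, so dividing the inlier weight by the partition function and clearing the common factor $1/N_R$ gives exactly (proba_corr). The main obstacle is the expectation computation for $\log\phi_{ij}$: correctly expanding $\mathbb{E}_{q_1}\|s_j-r_i-v_i\|^2$ to recover the trace term, and verifying that the per-coordinate variance factors combine so that the $\Sigma_v^i$ correction enters as a single scalar $\mathrm{Tr}(\Sigma_v^i)$ under the isotropic covariance $\varsigma_i^2 I_d$. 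The rest is a routine normalisation of a discrete distribution.
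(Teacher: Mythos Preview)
Your proposal is correct and follows essentially the same route as the paper, which defers the proof to supplementary material but states that it follows the standard Variational Bayesian Inference derivation from \cite{article:Registration_BCPD,article:VBI}. The steps you outline---isolating the $(c,e)$-dependent terms in the log-joint, computing $\mathbb{E}_{q_1}\|s_j-r_i-v_i\|^2=\|s_j-r_i-\mu_v^i\|^2+\mathrm{Tr}(\Sigma_v^i)$ to obtain the $\langle\phi_{ij}\rangle$ correction, and normalising the resulting categorical distribution over the $N_R+1$ states---are exactly the BCPD-style derivation adapted to the per-point variances $\varsigma_i^2$.
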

	\begin{prop} The update for each variance term $\varsigma_i^2$ is given as 
		\begin{equation}
			\varsigma_i^2 = \frac{1}{ d}\Big( \frac{ [ \tilde{P}\textup{diag}(s)s]_i -2\bar{r}_i^T [ \tilde{P}s ]_i  }{\nu_i}   +  \|\bar{r}_i\|^2 + \textup{Tr}(\Sigma_v^i) \Big),
		\end{equation}where $[A]_i$ refers to the $i$-th row of matrix $A$.
		\label{prop:updates_q3}
	\end{prop}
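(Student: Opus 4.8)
The plan is to instantiate the Coordinate Ascent Variational Inference update $q_{3i}(\varsigma_i^2)^* \propto \exp\{\mathbb{E}_{-i}[\log p(s,r,\theta)]\}$ for the factor $\varsigma_i^2$, starting from the full joint \eqref{eq:joint_distribution}, and then extract the point update by maximizing the resulting functional. First I would take the logarithm of the joint and discard every term that does not depend on $\varsigma_i^2$: the prior $p(v|r)=\mathcal{N}(0,K)$ is independent of $\varsigma_i^2$, and in the product $\prod_j p(s_j,e_j,c_j|r,v,\varsigma^2)$ the only $\varsigma_i^2$-dependence sits in the Gaussian factors $\phi_{ij}$ raised to the power $c_j\gamma_i(e_j)$. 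Writing $\log\phi_{ij} = -\tfrac{d}{2}\log(2\pi\varsigma_i^2)-\tfrac{\|s_j-\mathcal{T}(r_i)\|^2}{2\varsigma_i^2}$ and collecting, the $\varsigma_i^2$-relevant part of $\log p$ becomes $\sum_{j=1}^{N_S} c_j\gamma_i(e_j)\bigl[-\tfrac{d}{2}\log\varsigma_i^2 - \tfrac{\|s_j-r_i-v_i\|^2}{2\varsigma_i^2}\bigr]$, up to an additive constant.

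The key step is then taking the expectation $\mathbb{E}_{-i}[\cdot]$ against the remaining factors $q_1(v)$ and $q_2(c,e)$. For the discrete factor I would use the definition $p_{ij}=\mathbb{E}[c_j\gamma_i(e_j)]$, which replaces the indicator weights by the correspondence probabilities and makes $\nu_i=\sum_j p_{ij}$ appear as the effective weight of the logarithmic term. For the continuous factor, since $q_1(v)=\mathcal{N}(\mu_v,\Sigma_v)$ by Proposition~\ref{prop:updates_q1} and $\mathcal{T}(r_i)=r_i+v_i$, I would use the identity $\mathbb{E}_v\|s_j-r_i-v_i\|^2 = \|s_j-\bar{r}_i\|^2 + \mathrm{Tr}(\Sigma_v^i)$, where $\bar{r}_i = r_i+[\mu_v]_i$ is the deformed point of Proposition~\ref{prop:updates_q2} and $\Sigma_v^i$ is the $d\times d$ diagonal block of $\Sigma_v$ belonging to $v_i$; the cross term vanishes because $\mathbb{E}_v[v_i]=[\mu_v]_i$. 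Substituting yields $\mathbb{E}_{-i}[\log p] = -\tfrac{d\nu_i}{2}\log\varsigma_i^2 - \tfrac{1}{2\varsigma_i^2}\bigl(\sum_j p_{ij}\|s_j-\bar{r}_i\|^2 + \nu_i\,\mathrm{Tr}(\Sigma_v^i)\bigr)$ plus a constant.

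This functional is (the logarithm of) an inverse-gamma kernel in $\varsigma_i^2$, so to read off the point update I would differentiate it with respect to $\varsigma_i^2$ and set the derivative to zero, giving $\varsigma_i^2 = \tfrac{1}{d\nu_i}\bigl(\sum_j p_{ij}\|s_j-\bar{r}_i\|^2+\nu_i\mathrm{Tr}(\Sigma_v^i)\bigr)$. The final step is purely algebraic: expanding $\|s_j-\bar{r}_i\|^2 = \|s_j\|^2 - 2\bar{r}_i^T s_j + \|\bar{r}_i\|^2$, summing against $p_{ij}$, and pulling $\nu_i\|\bar{r}_i\|^2$ out of the fraction recovers the stated matrix form, once $\sum_j p_{ij}\|s_j\|^2$ and $\sum_j p_{ij}s_j$ are identified with $[\tilde{P}\,\mathrm{diag}(s)s]_i$ and $[\tilde{P}s]_i$ respectively.

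I expect the main obstacle to be bookkeeping rather than any conceptual difficulty: correctly carrying the expectation over the coupled Gaussian $q_1(v)$ through the quadratic term -- in particular recognizing that only the marginal block $\Sigma_v^i$ of the full covariance $\Sigma_v$ survives in the trace -- and then matching the scalar expression to the block/Kronecker notation $\tilde{P}=P\otimes I_d$ used in the statement, where $[\tilde{P}\,\mathrm{diag}(s)s]_i$ must be read as the scalar block-sum $\sum_j p_{ij}\|s_j\|^2$. Care is also needed to confirm that no other factor of the joint (the prior, the outlier component, or the membership term) contributes a $\varsigma_i^2$-dependent term, so that the maximization is governed solely by the GMM likelihood.
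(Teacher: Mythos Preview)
Your proposal is correct and follows essentially the same route as the paper's supplementary derivation: isolate the $\varsigma_i^2$-dependent part of $\log p(s,r,\theta)$, take the CAVI expectation using $p_{ij}=\mathbb{E}[c_j\gamma_i(e_j)]$ and the Gaussian second-moment identity $\mathbb{E}_v\|s_j-r_i-v_i\|^2=\|s_j-\bar r_i\|^2+\mathrm{Tr}(\Sigma_v^i)$, and maximize the resulting inverse-gamma-type kernel. Your identification of the bookkeeping pitfalls (only the block $\Sigma_v^i$ survives; matching $\sum_j p_{ij}\|s_j\|^2$ and $\sum_j p_{ij}s_j$ to the Kronecker notation $[\tilde P\,\mathrm{diag}(s)s]_i$ and $[\tilde P s]_i$) is exactly what is needed to reach the stated form.
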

	
	\subsection{Parallel with GP framework}
	
	If we assume that there are no missing points, then it is possible to establish a parallel between the previous formulation and the GP framework. With this aim, we reformulate our expressions in order to obtain a similar structure to Propositions \ref{prop:updates_q1} through \ref{prop:updates_q3}. Note that the update step of $p_{ij}$ can be understood as the "getting correspondence" part, i.e. the inner optimization of Problem~\eqref{eq:Reg_Problem}. 
	
	\begin{prop} Considering no missing points, i.e. $R_{C} = R$, and if the variance $(\sigma_{Ci}^j)^2$ in Equation \eqref{eq:updates_multi} is taken as
		\begin{equation}
			(\sigma_{Ci}^j)^2 = \frac{\varsigma_i^2}{p_{ij}}, \label{eq:var_parallel}
		\end{equation} where $p_{ij}$ is given by \eqref{eq:proba_corr}, then an equivalent exists between the update equations in Proposition~\ref{prop:updates_q1}, Proposition~\ref{prop:updates_q2}, Proposition~\ref{prop:updates_q3} and the update equations for multi-annotator GPR in \eqref{eq:GPR_shape}.
		\label{prop:variance}
	\end{prop}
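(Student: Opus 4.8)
The plan is to prove the equivalence by substitution followed by matrix-algebra bookkeeping, organised around the two quantities that separate multi-annotator GPR from ordinary GPR: the effective (combined) noise and the effective (combined) label $\hat{\delta}$. The hypotheses do the simplifying work. The assumption $R_{C}=R$ forces $C=N_R$ and collapses every kernel block in the soft-assignment predictive equations onto the single matrix $K=K_{RR}$, since $K_{R_{C}R}=K_{R_{C}R_{C}}=K_{RR}$. The prescribed variance \eqref{eq:var_parallel}, $(\sigma_{Ci}^{j})^2=\varsigma_i^2/p_{ij}$ with $p_{ij}$ from \eqref{eq:proba_corr}, is exactly what makes the combination rules \eqref{eq:updates_multi} reproduce the $\nu_i$ and $\varsigma_i^2$ of Propositions \ref{prop:updates_q1}--\ref{prop:updates_q3}.

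First I would carry out the combination step explicitly. Inserting $(\sigma_{Ci}^{j})^2=\varsigma_i^2/p_{ij}$ into the precision rule of \eqref{eq:updates_multi} gives $1/\sigma^2_{Ci}=\sum_{j\in\mathcal{C}_i}p_{ij}/\varsigma_i^2=\nu_i/\varsigma_i^2$ (taking $\mathcal{C}_i$ to exhaust the support of $p_{i\cdot}$, so that $\sum_{j\in\mathcal{C}_i}p_{ij}=\nu_i$), so the combined variance is $\sigma^2_{Ci}=\varsigma_i^2/\nu_i$ and, in block form, the combined precision is $\tilde{D}_\nu\tilde{D}_{\varsigma^2}^{-1}$, which I abbreviate $\Lambda$. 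Inserting the same choice into the label rule, using $\delta_{Ci}^{j}=s_j-\bar{r}_{Ci}$, gives $\hat{\delta}_{Ci}=\tfrac{1}{\nu_i}\sum_j p_{ij}s_j-\bar{r}_{Ci}$, i.e. $\hat{\delta}=\tilde{D}_\nu^{-1}\tilde{P}s-\bar{r}$ with the block indexing $[\tilde{P}s]_i=\sum_j p_{ij}s_j$ induced by $\tilde{P}=P\otimes I_d$. For a single update started from the undeformed reference we identify $\bar{r}=r$ (equivalently $\bar{r}=\mathcal{T}(r)$ is the correspondence reference of Proposition \ref{prop:updates_q2}), so $\hat{\delta}=\tilde{D}_\nu^{-1}\tilde{P}s-r$.

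With these ingredients the covariance match is clean: specialising the soft-assignment posterior covariance to $R_{C}=R$ reduces it, by the standard Woodbury identity, to the information-form covariance $(K^{-1}+\Lambda)^{-1}$, which is literally $\Sigma_v$ in \eqref{eq:BCPD_update_def}. For the mean I would then show the soft-assignment posterior mean collapses, at test equal to train, to $\Sigma_v\Lambda\hat{\delta}$; substituting $\hat{\delta}=\tilde{D}_\nu^{-1}\tilde{P}s-r$ yields $\Sigma_v\tilde{D}_\nu\tilde{D}_{\varsigma^2}^{-1}(\tilde{D}_\nu^{-1}\tilde{P}s-r)$, which is exactly $\mu_v$. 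The single non-trivial algebraic fact needed here is the push-through (commuting-function) identity $K(K+\Lambda^{-1})^{-1}=(K^{-1}+\Lambda)^{-1}\Lambda$, which moves the cross-kernel factor of the predictive formula through the inverse; since $(I+K\Lambda)^{-1}$ is a function of $K\Lambda$ it commutes with $K\Lambda$, and the identity follows. Finally I would note that the updates of $p_{ij}$ and $\varsigma_i^2$ (Propositions \ref{prop:updates_q2} and \ref{prop:updates_q3}) are the shared ``inner'' correspondence and noise-estimation steps, so nothing further needs matching there.

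I expect the main obstacle to be the bookkeeping that reconciles the two written forms of the GP posterior: the soft-assignment equations carry the factor $K_{R_{C}R}^{T}(K_{R_{C}R_{C}}^{-1}+\tilde{D}_{\sigma^2_n})^{-1}\tilde{D}_{\sigma^2_n}$, whereas Proposition \ref{prop:updates_q1} is written in the information form $\Sigma_v\Lambda$. Collapsing the cross-kernel factor under $R_{C}=R$ and passing between the two requires the push-through identity above together with careful tracking of whether $\tilde{D}_{\sigma^2_n}$ enters as a variance or as a precision; the combination step fixes this, since the diagonal entries must appear as $\nu_i/\varsigma_i^2$ rather than their reciprocals. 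The only conceptually delicate point, as opposed to the purely mechanical ones, is the $\bar{r}$-versus-$r$ identification in $\hat{\delta}$, which ties the ICP-style incremental update of the GP scheme to the displacement-from-reference convention of the variational scheme; once that identification is fixed, the remaining verification is routine linear algebra.
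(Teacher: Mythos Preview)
Your proposal is correct and follows essentially the same route as the paper's own proof: substitute \eqref{eq:var_parallel} into \eqref{eq:updates_multi} to obtain $\sigma_{Ci}^2=\varsigma_i^2/\nu_i$ and $\hat{\delta}=\tilde{D}_\nu^{-1}\tilde{P}s-r$, collapse the kernel blocks under $R_C=R$, and then rewrite $K(K+\tilde{D}_{\sigma_n^2})^{-1}$ into the information form $(K^{-1}+\tilde{D}_\nu\tilde{D}_{\varsigma^2}^{-1})^{-1}\tilde{D}_\nu\tilde{D}_{\varsigma^2}^{-1}$ to recover $\mu_v$ and $\Sigma_v$. The paper carries out the last step by direct factor-and-regroup manipulation rather than naming the push-through/Woodbury identity, and it silently makes the $\bar{r}=r$ identification you flag, but otherwise the argument is the same.
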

	
	\begin{proof}
		When $R_{C} = R$, and since $K^T=K$, the posterior mean and covariance in \eqref{eq:GPR_shape} become
		\begin{equation*}
			\begin{split}
				\mu_p &= K(K+ \tilde{D}_{\sigma^2_n})^{-1}\hat{\delta}\\
				D_{\sigma_P^2} &= K-  K(K+\tilde{D}_{\sigma^2_n})^{-1} K,
			\end{split}\label{eq:posterior_no_miss}
		\end{equation*} where $D_{\sigma^2_n} =\textrm{diag}(\sigma^2_{1},\dots, \sigma^2_{N_R})$, with $\sigma^2_{i}$ and $ \hat{\delta}_{i}$ given by \eqref{eq:updates_multi} and here restated without the notation for correspondences (as there are no missing points)
		\begin{equation*}
			\frac{1}{\sigma^2_{i}} = \sum_{j \in N_S}  \frac{1}{(\sigma_{i}^j)^2}, \quad \hat{\delta}{i} = \sigma^2_{i} \sum_{j \in N_S} \frac{\delta_{i}^j}{(\sigma_{i}^j)^2}.
		\end{equation*} Taking the variance as in \eqref{eq:var_parallel}, we can write the previous equations as
		\begin{equation*}\begin{split}
				\frac{1}{\sigma^2_{i}} &= \frac{P_i 1_{N_s}}{\varsigma_i^2} = \frac{\nu_i}{\varsigma_i^2}\\
				\hat{\delta}_{i}  & =\nu_i^{-1}\sum_j p_{ij} (s_j  -r_i) = \nu_i^{-1} \tilde{P}_i s - \nu_i^{-1}\sum_j p_{ij}  r_i = \nu_i^{-1} \tilde{P}_i s - r_i ,
			\end{split}\label{eq:ann_input}
		\end{equation*}
		where $P_i$ refers to the $i$-th row of matrix $P$. Therefore, we have that $D_{\sigma^2_n} = D_{\varsigma^2} D_{\nu}^{-1}$ and $\hat{\delta} =  \tilde{D}_{\nu}^{-1}\tilde{P}s-r$. The posterior deformations  in \eqref{eq:posterior_no_miss} can then be written as 
		\begin{equation*}
			\begin{split}
				\mu_p &= K(K+ \tilde{D}_{\varsigma^2} \tilde{D}_{\nu}^{-1})^{-1}\delta\\
				&= KK^{-1}\Big[1+  \tilde{D}_{\varsigma^2} \tilde{D}_{\nu}^{-1}K^{-1}\Big]^{-1}\delta\\
				&=  \Big[ K^{-1} + \tilde{D}_{\nu}\tilde{D}_{\varsigma^2}^{-1} \Big]^{-1}  \tilde{D}_{\nu} \tilde{D}_{\varsigma^2}^{-1} ( \tilde{D}_{\nu}^{-1}\tilde{P}s-r)\\
			\end{split}\label{eq:final_ourdef}
		\end{equation*} and the covariance as \begin{equation*}
			\begin{split}
				D_{\sigma_P^2} &=K-  K(K+\tilde{D}_{\varsigma^2} \tilde{D}_{\nu}^{-1})^{-1} K\\
				&= K - \Big[ K^{-1} + \tilde{D}_{\nu}\tilde{D}_{\varsigma^2}^{-1} \Big]^{-1}  \tilde{D}_{\nu} \tilde{D}_{\varsigma^2}^{-1}K \\
				&= \Big[ K^{-1} + D_{\nu}D_{\varsigma^2}^{-1} \Big]^{-1} \Big[ K(K^{-1} + \tilde{D}_{\nu}\tilde{D}_{\varsigma^2}^{-1}) - \tilde{D}_{\nu}\tilde{D}_{\varsigma^2}^{-1}K \Big] \\
				&= \Big[ K^{-1} + \tilde{D}_{\nu}\tilde{D}_{\varsigma^2}^{-1} \Big]^{-1},
			\end{split}\label{eq:final_cov}
		\end{equation*} thus being equivalent to the expressions for $\mu_v$ and $\Sigma_v$ in \eqref{eq:BCPD_update_def}. This entails that $v$ and $\delta$, under the assumption of no missing data, refer to the same variable and are updated with equivalent equations. Therefore, if $p_{ij}$ and $\varsigma_i^2$ are updated according to Proposition~\ref{prop:updates_q2} and Proposition~\ref{prop:updates_q3} (respectively), there is an equivalence between the two methods.
	\end{proof}
	Although this is established for the case of no missing data, we take \eqref{eq:var_parallel} as a reasonable update for the annotators variance, together with the necessary updates for $p_{ij}$ and $\varsigma_i^2$.

	\subsection{Missing data points}
	
	Given the probability matrix $P$, we apply a predefined threshold $P_{MIN}$, such that pairings with a lower value than $P_{MIN}$ are identified as non-corresponding. So, for each point $r_i$, the considered correspondences to the target are $\mathcal{C}_i = \{j : s_j \in S , p_{ij}>P_{MIN} \}$. Then if a point $r_i$ has no elements in $\mathcal{C}_i$, it is considered a missing point, meaning that  $R_{M} = \{r_i: r_i \in R, |\mathcal{C}_i|=0  \}$ and $R_{C} = \{r_i: r_i \in R, |\mathcal{C}_i|>0  \}$.

	\subsection{Algorithm for SFGP}\label{app:algorithm}
	
	The pseudo-code for our method is found in Algorithm~\ref{alg:RegFull} and Algorithm~\ref{alg:GetCorr}, where the former contains the main outer steps and the latter details the computation for the correspondence part.
	\begin{algorithm}[hbt!]
		\begin{algorithmic}[1]
			\REQUIRE{$r, s, K, D_{\varsigma_0^2},\omega, P_{MIN}$} 
			\STATE{$\bar{r} = r$, $D_{\sigma_P^2} = \textbf{0} $}
			\WHILE{some stopping criterion is not met}
			\STATE{$R_{C},\hat{\delta},D_{\sigma^2_n} =\textrm{get\_correspondences} (r,s,\bar{r},D_{\sigma_P^2},D_{\varsigma^2}, \omega, P_{MIN})$}\label{algStep:step_getcorr}
			\STATE{$\mu_p = K_{R_{C}R}^T(K_{R_{C}R_{C}}+ \tilde{D}_{\sigma^2_n})^{-1}\hat{\delta}$}
			\STATE{$D_{\sigma_P^2} = K_{RR}-  K_{R_{C}R}^T(K_{R_{C}R_{C}}+\tilde{D}_{\sigma^2_n})^{-1} K_{R_{C}R}$}
			\STATE{$\bar{r} = r + \mu_p$}
			\STATE{$\varsigma_i^2 = \frac{1}{ d}\Big( \frac{ [ \tilde{P}\textrm{diag}(s)s]_i -2\bar{r}_i^T [ \tilde{P}s ]_i  }{\nu_i}   +  \|\bar{r}_i\|^2 + \textrm{Tr}(D_{\sigma_P^2}) \Big)$}
			\ENDWHILE
		\end{algorithmic}
		\caption{SFGP}
		\label{alg:RegFull}
	\end{algorithm}
	\begin{algorithm}[hbt!]
		\begin{algorithmic}[1]
			\REQUIRE{$r,s,\bar{r},D_{\sigma_P^2},D_{\varsigma^2}, \omega, P_{MIN}$}
			\ENSURE{$D_{\sigma^2_n} = \textrm{diag}(\sigma^2_{C1},\dots, \sigma^2_{CC}),  \hat{\delta} =  (\hat{\delta}_{C1}^T, \dots ,\hat{\delta}_{CC}^T)^T, R_{C}$}
			\FOR{$(i, j) \gets (1, 1)$ to $(N_R, N_S)$} 
			\STATE{$\phi_{ij}(s_j;\bar{r}_i,\varsigma^2)= \frac{1}{(\varsigma_i \sqrt{2\pi})^d} \exp{ \Big(  -\frac{\|s_j - \bar{r}_i\|^2}{2\varsigma_i^2}  \Big)}$}
			\STATE{$\langle \phi_{ij} \rangle = \phi_{ij}\exp\Big\{-\frac{1}{2\varsigma^2}Tr(\sigma_{P_i}^2 I_d)\Big\}$}
			\STATE{	$	p_{ij} =  \frac{(1-w)\langle\phi_{ij}\rangle}{\frac{N_R}{N_S} w  + (1-w) \sum_{i'=1}^{N_R} \langle\phi_{i'j}\rangle}  $}
			\STATE{$(\sigma_{i}^j)^2 = \frac{\varsigma_i^2}{p_{ij}}$}
			\ENDFOR
			\STATE{$\mathcal{C}_i = \{j : s_j \in S , p_{ij}>P_{MIN} \}$\par$R_{C} = \{r_i: r_i \in R, |\mathcal{C}_i|>0  \}$}
			\FOR{$i \in R_{C}$} 			
			\STATE{$\frac{1}{\sigma^2_{Ci}} = \sum_{j \in \mathcal{C}_i}  \frac{1}{(\sigma_{Ci}^j)^2} $}
			\STATE{$ \hat{\delta}_{Ci} = \sigma^2_{Ci} \sum_{j \in \mathcal{C}_i} \frac{\delta_{Ci}^j}{(\sigma_{Ci}^j)^2}$}	
			\ENDFOR
		\end{algorithmic}
		\caption{get\_correspondences}
		\label{alg:GetCorr}
	\end{algorithm}
	
	\section{Experimental results and discussion} 
	\label{sec:experiments}

	In this section we present the results of experiments with both 2D and 3D data, with the respective discussion. For each subsection, we first describe the datasets and settings, following with an analysis of the results.
	
	\subsection{2D data}\label{subsec:Exp_2d_fish}
	
	\subsubsection{Dataset}
	As 2D data, we take the \textit{Fish Dataset} \cite{article:TPS-RPM}, where the reference is a 2D fish with 98 points. The target point sets are then generated by applying different kinds of alterations to the data. Non-rigid deformations are generated by warping the reference points with a Gaussian radial basis function. The dataset has four other variations considering outliers, missing data, rotation and noise, all of them with a moderate level of deformations included\footnote{A more detailed description of the dataset can be found in \cite{article:TPS-RPM}}. In order to accurately replicate the ear data challenges, we further create a new dataset, based on the noise level 2 of the \textit{Fish Dataset}. Here, we introduce structured missing data in the following way: we choose one point of the reference as centre and increasingly set the width of a squared bounding box around this point --- all the points within the box are removed. 
	
	\subsubsection{Setting}
	\begin{table}[]	\centering
		\begin{tabular}{p{3.5cm}p{11cm}}
			\textbf{Name}                                & \textbf{Description}                                                                                                          \\ \hline
			\textit{SFGP\_Full} & SFGP in its complete version  \\
			\textit{SFGP\_bcpdReg} & SFGP where the registration variance $\varsigma^2$ is taken as a scalar instead of a vector, computed according to BCPD equations  \\
			\textit{GPReg\_noTresh} & SFGP without the threshold for missing points $P_{MIN}$  \\
			\textit{GPClosestPnt} & registration with GPR, but where the correspondence part is achieved by taking the closest point, i.e. not considering multi-annotators  \\
			\textit{BCPD\_Standard} &  BCPD method with the standard parameters \\
			\textit{BCPD\_Opt\_Norm} &   BCPD method with optimized parameters for the \textit{Fish Dataset}, with normalization of both shapes \\
			\textit{BCPD\_Opt\_noNorm} &  BCPD method with optimized parameters for the \textit{Fish Dataset}, without normalization of both shapes, since this is not used in our method and could potentially benefit it in some cases. \\
		\end{tabular}\caption{Brief description of the different methods used in the experiments. For BCPD, the absence of normalization means that the shapes maintain their relative size, i.e., they are both normalized with respect to the target shape size, as recommended by the authors.}\label{tab:methods_exp}
	\end{table}
	
	We consider different variations of our method, as well as different variations of BCPD, in order to show the relevance of each modification. Their description can be found in Table~\ref{tab:methods_exp}. To fairly compare our method with BCPD, we set their parameters with the same value whenever possible --- consequently, we use the Squared Exponential kernel for our model. The remaining parameters and initial values for VBI are tuned with the deformation level 1 for both methods, by grid search. A detailed description of all settings can be found in the Supplementary Material.

	\subsubsection{Metrics} 
	For the evaluation of results, we mainly look at the Euclidean distance error between corresponding deformed reference $\bar{r}_i$ points and the ground truth $s_i^*$, i.e. the complete and deformed target shape without noise, averaged over the shape, so $d(s,t) = \frac{1}{N_R}\sum_{i=1}^{N_R} \| s_i^* - \bar{r}_i \|_2^2$. This is then averaged over the entire dataset, consisting of 100 samples. However, it should be noted that BCPD will occasionally not lead to a successful registration, in which case it does not produce an output or does not produce correspondence for any point. Since this result will not be taken into account for the distance metric and often occurs in the most challenging settings, we also present the fraction of successful registration. Our method does not consider a failed registration unless there are no deformations found in the first iteration.

	\subsubsection{Discussion} 
	The results for all the considered methods and data variations can be found in Figure~\ref{fig:Fish_Tests} and Figure~\ref{fig:Fish_Tests_2}. Our main focus is the dataset with an increasing level of missing regions (Figure~\ref{subfig:Missing}), as this closely replicates the challenges in the ear reconstruction problem. While for the lowest level it is evident that BCPD (when optimized) performs better, as we increase the missing area, our method presents a progressive advantage. Comparing \textit{SFGP\_Full}, \textit{SFGP\_bcpdReg} and \textit{SFGP\_noTresh} it becomes clear why those modifications are advantageous when facing extensive missing regions. It is also evident that the closest point approach has the poorest performance overall. 
	
	It is also interesting to look at the results in the presence of outliers (Figure~\ref{subfig:Out1} and Figure~\ref{subfig:Out3}), for which we tested all methods with $\omega = 0.1$ and $\omega = 0.3$ (with the exception of \textit{GPClosestPnt} where this is not applicable), since this parameter reflects the expected outlier probability. While BCPD outperforms SFGP when $\omega$ is adequately adjusted to the real outlier occurrence, we note that the behaviour of our method is not as dependent on this parameter. Thus, in the absence of prior knowledge, SFGP is found to be a more suitable choice.  
	
	Looking at the variation of noise (Figure~\ref{subfig:Noise}) and deformations (Figure~\ref{subfig:Def}), we see that overall an adequately fitted BCPD outperforms our method and is able to achieve lower errors, even when both parameters are previously tuned. We also note that the two variations of our method always perform better than the full proposal for these scenarios --- the proposed alterations do not bring an advantage when we are not dealing with structured missing data. However, this decrease in performance is deemed acceptable given the gain it provides in Figure~\ref{subfig:Missing} and when compared with the Closest Point proposal always leads to lower error.
	
	An intuition on why our method is able to cope well with extensive missing data is offered in Figure~\ref{fig:Fish_Example}, where we compare the fitting results from BCPD with different levels of deformation and our proposed method. It is clear that the main challenge in achieving an adequate fitting with BCPD is that a high level of deformations leads to the collapse of the missing regions, while lower values do not provide enough flexibility to fit small details found in the non-missing parts. With SFGP, collapsing is prevented, while allowing enough non-rigid deformation to accurately fit fine details.
	
	Furthermore, in Figure~\ref{fig:Missing_details} we present additional metrics for the missing region version of this dataset. The high performance on both recall and precision presented by SFGP tells us that the lower distance error previously observed relates to an accurate identification of missing points. Additionally, we notice that the GP with closest point correspondence exhibits a very high precision, but at the cost of a low recall. 
	
	\begin{figure}[hbt!]
		\centering
		\begin{subfigure}[t]{\linewidth}\centering
			\includegraphics[scale=0.5]{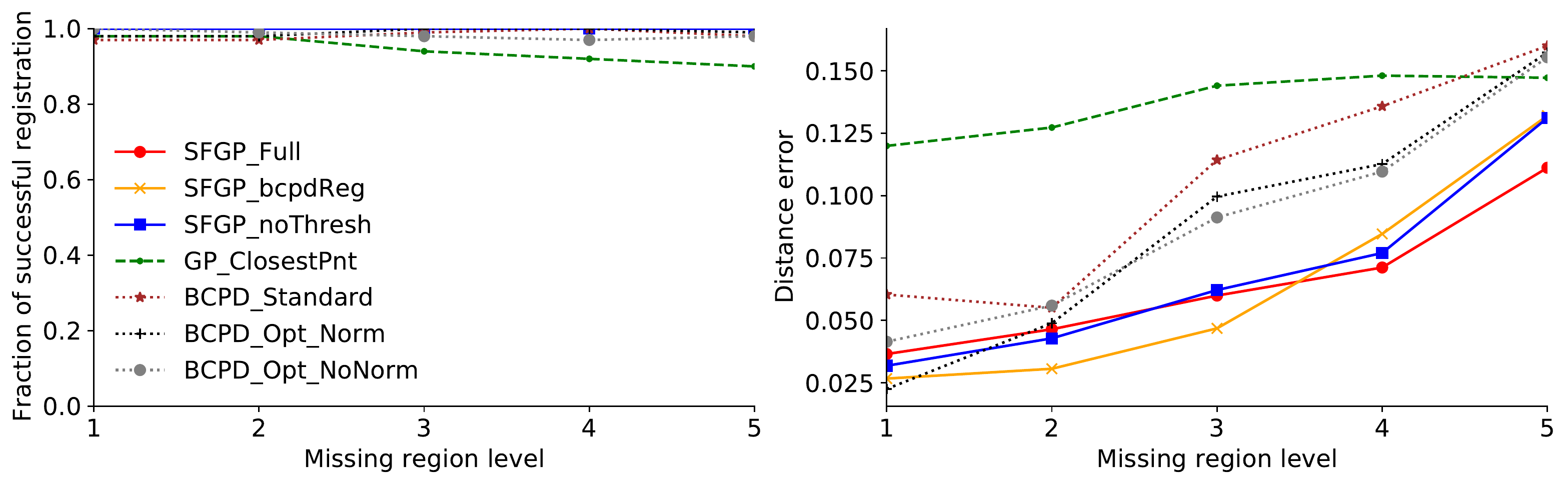}\caption{Increasing level of missing region}\label{subfig:Missing}
		\end{subfigure}
		
		\begin{subfigure}[t]{\linewidth}\centering
			\includegraphics[scale=0.5]{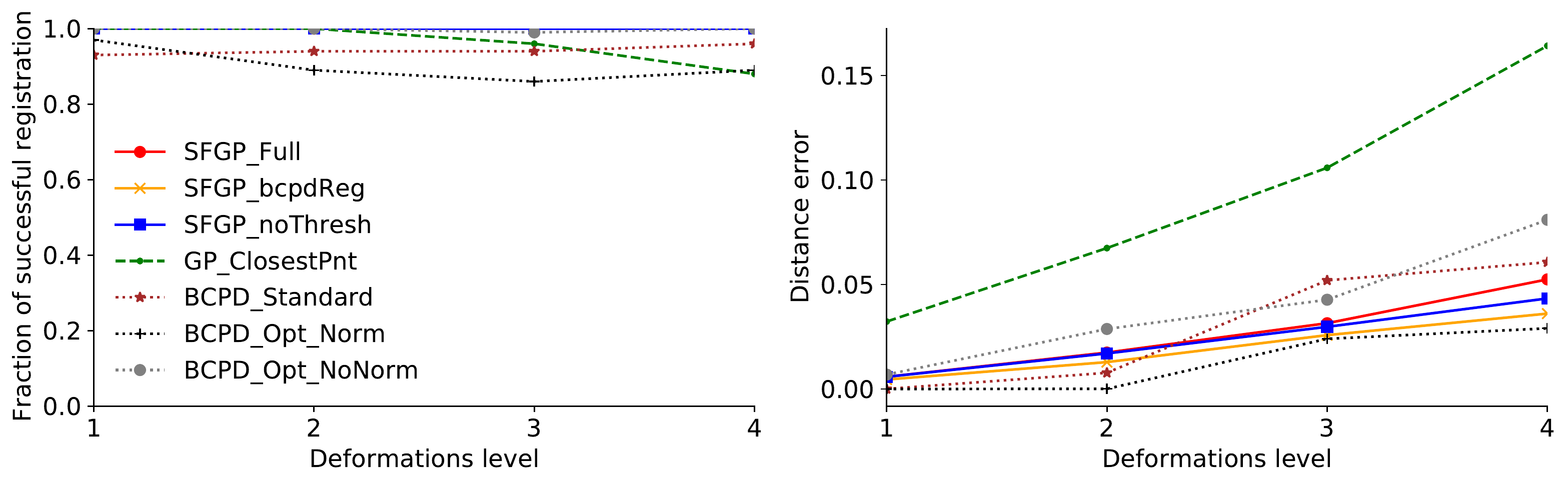}\caption{Increasing level of deformations}\label{subfig:Def}
		\end{subfigure}
		
		\begin{subfigure}[t]{\linewidth}\centering
			\includegraphics[scale=0.5]{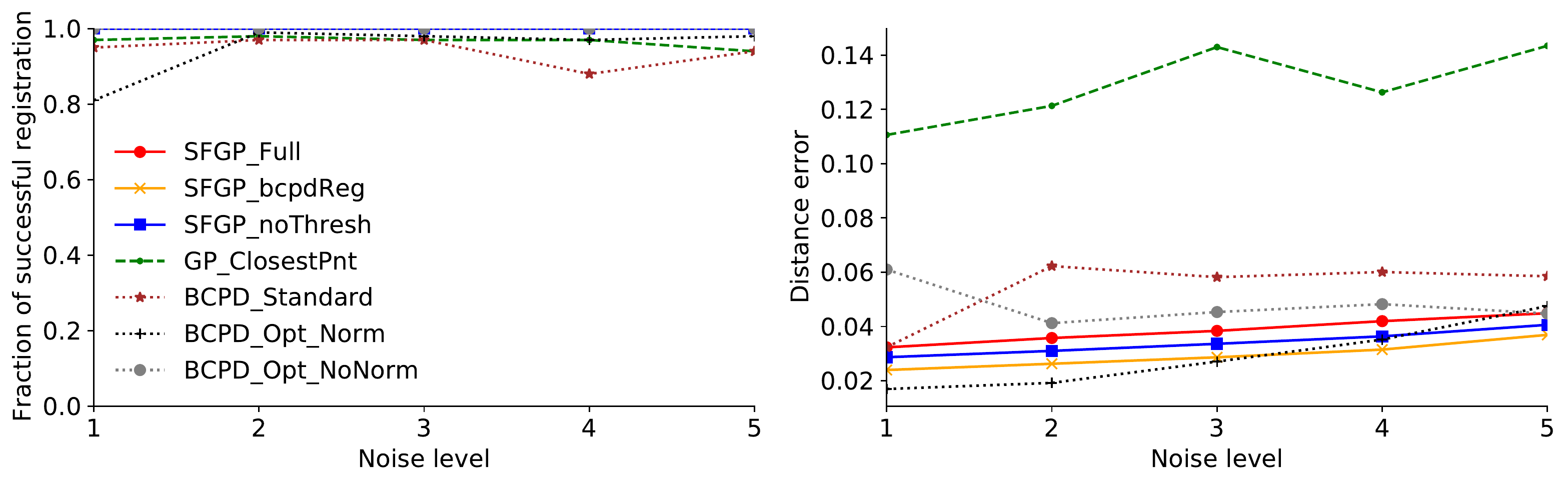}\caption{Increasing level of noise}\label{subfig:Noise}
		\end{subfigure}
		\caption{Results for the \textit{Fish Dataset} with different types of modifications. The x-axis always depicts an increase in a given data modification, while the y-axis provides the ratio of successful items registered and the average distance error. Variants of SFGP are depicted with full lines, variants of BCPD are depicted with thin dashed lines and GP with closest point with a thick dashed line. Increasing levels of missing region refer to increasing widths of the bounding box surrounding the selected reference points (width ranging from $0.1$ to $0.4$); increasing levels of deformations refer to increasing variance of the warping Gaussian radial basis function; increasing levels of noise refer to increasing variance of the additive Gaussian noise (standard deviation ranging from $0$ to $0.05$). }
		\label{fig:Fish_Tests}
	\end{figure}
	\begin{figure}[hbt!]
		\centering
		\begin{subfigure}[t]{\linewidth}\centering
			\includegraphics[scale=0.5]{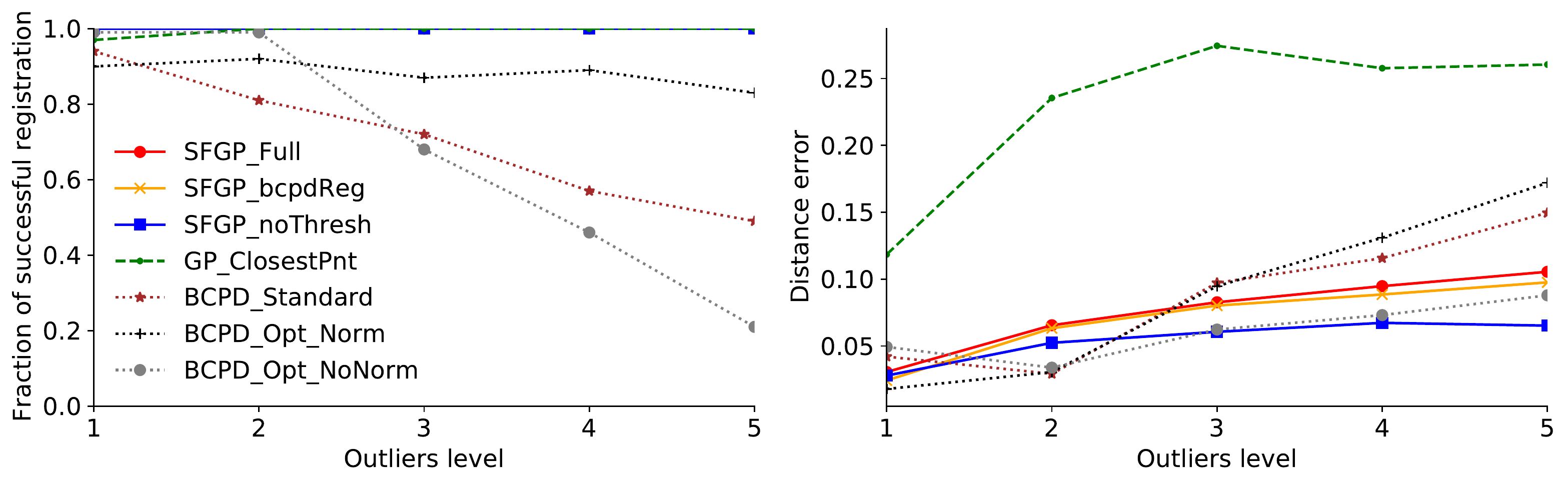}\caption{Increasing fraction of outliers, for $\omega = 0.1$}\label{subfig:Out1}
		\end{subfigure}
		
		\begin{subfigure}[t]{\linewidth}\centering
			\includegraphics[scale=0.5]{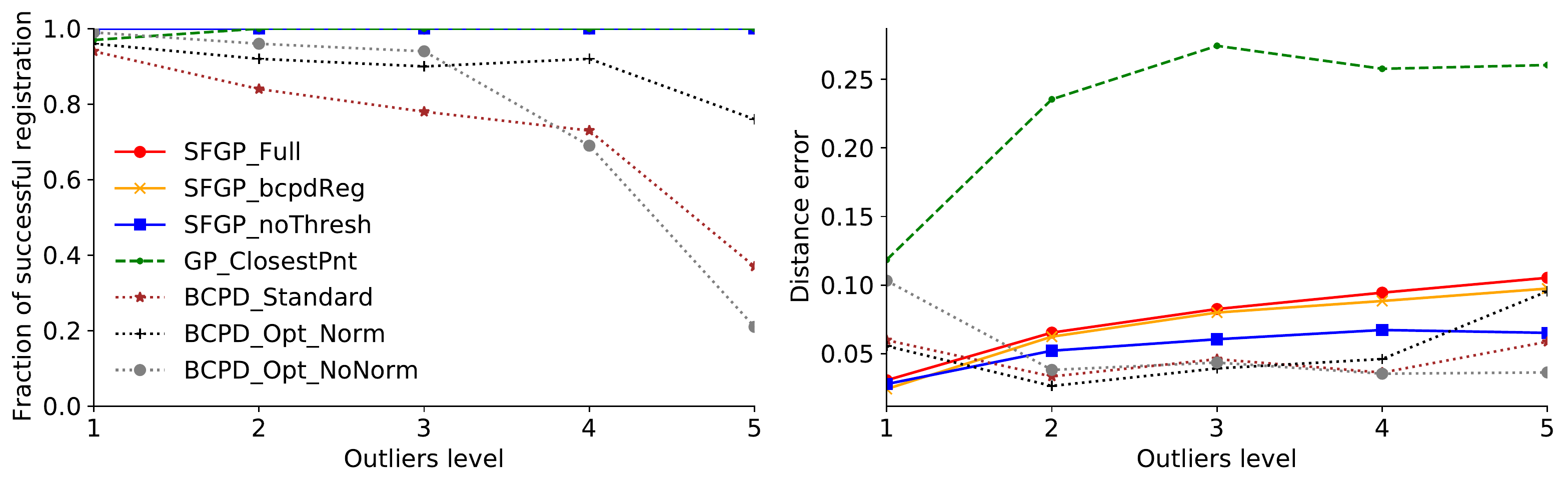}\caption{Increasing fraction of outliers, for $\omega = 0.3$}\label{subfig:Out3}
		\end{subfigure}
		
		\caption{Results for \textit{Fish Dataset} with increasing fraction of outliers. The x-axis always depicts an increase in a given data modification, while the y-axis provides the ratio of successful items registered and the average distance error. Variants of SFGP are depicted with full lines, variants of BCPD are depicted with thin dashed lines and GP with closest point with a thick dashed line. Increasing levels of outliers refer to the increasing ratio of outlier points with respect to the reference points (ranging from 0 to 2). }
		\label{fig:Fish_Tests_2}
	\end{figure}

	\begin{figure}[hbt!]\centering
		\begin{subfigure}[t]{\linewidth}\centering
			\includegraphics[scale=0.5]{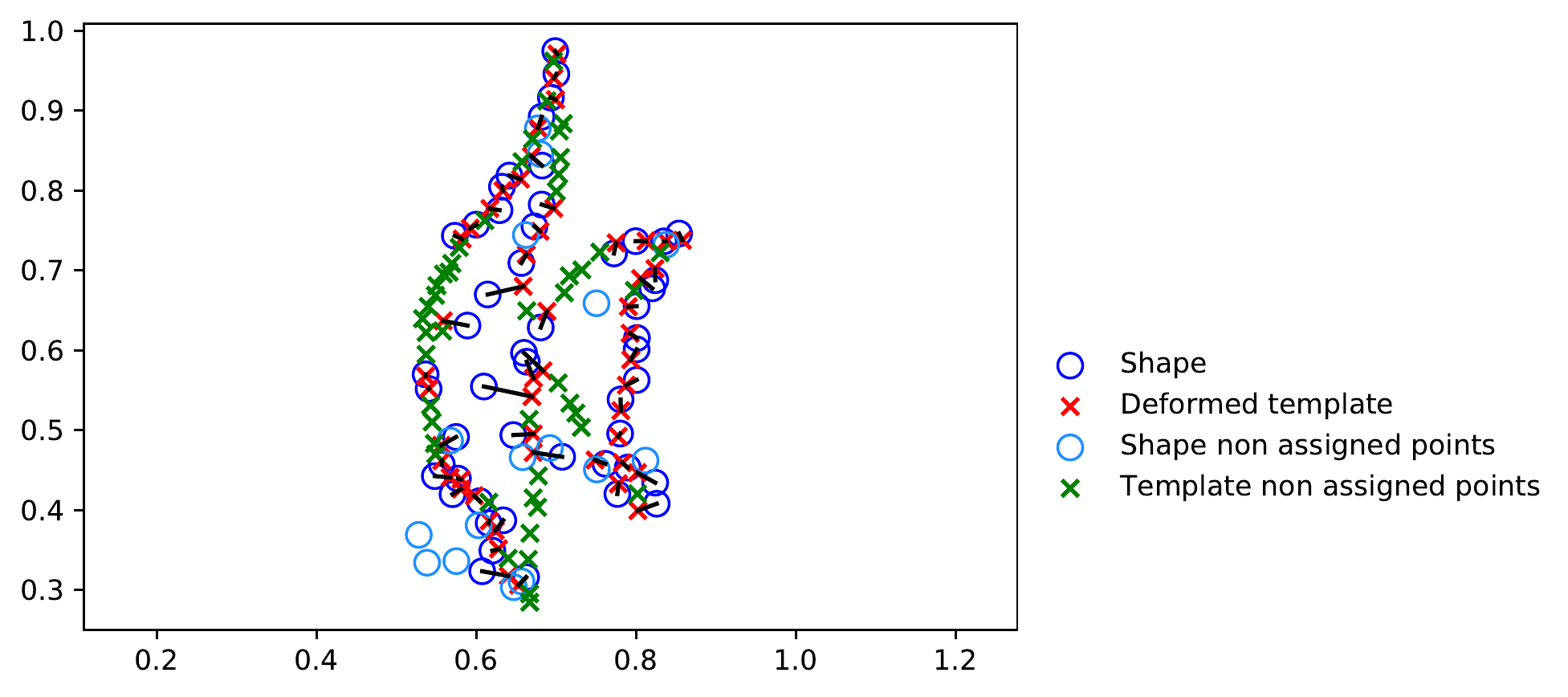}\caption{BCPD registration with low $\lambda$}\label{subfig:FishBCPDlow}
		\end{subfigure}
		
		\begin{subfigure}[t]{\linewidth}\centering
			\includegraphics[scale=0.5]{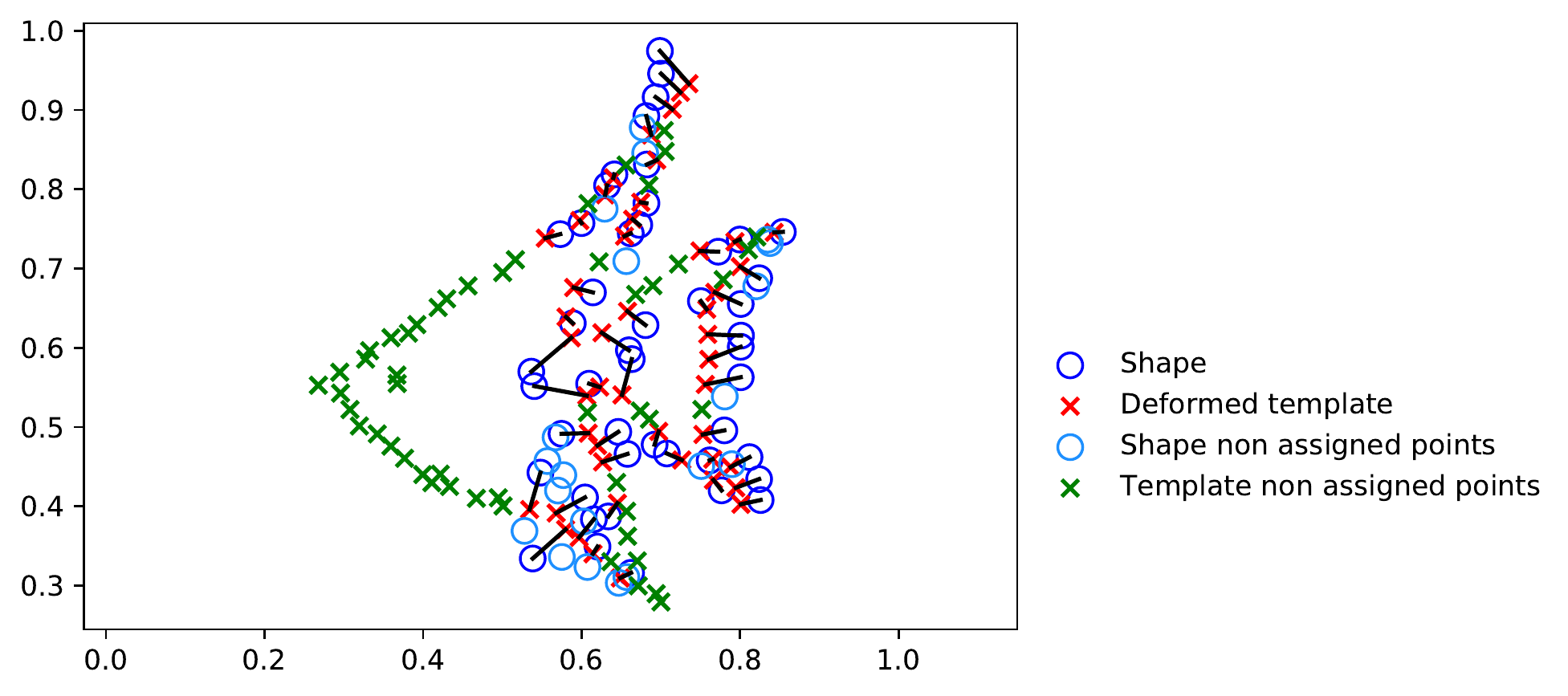}\caption{BCPD registration with high $\lambda$}\label{subfig:FishBCPDhigh}
		\end{subfigure}	
		
		\begin{subfigure}[t]{\linewidth}\centering
			\includegraphics[scale=0.5]{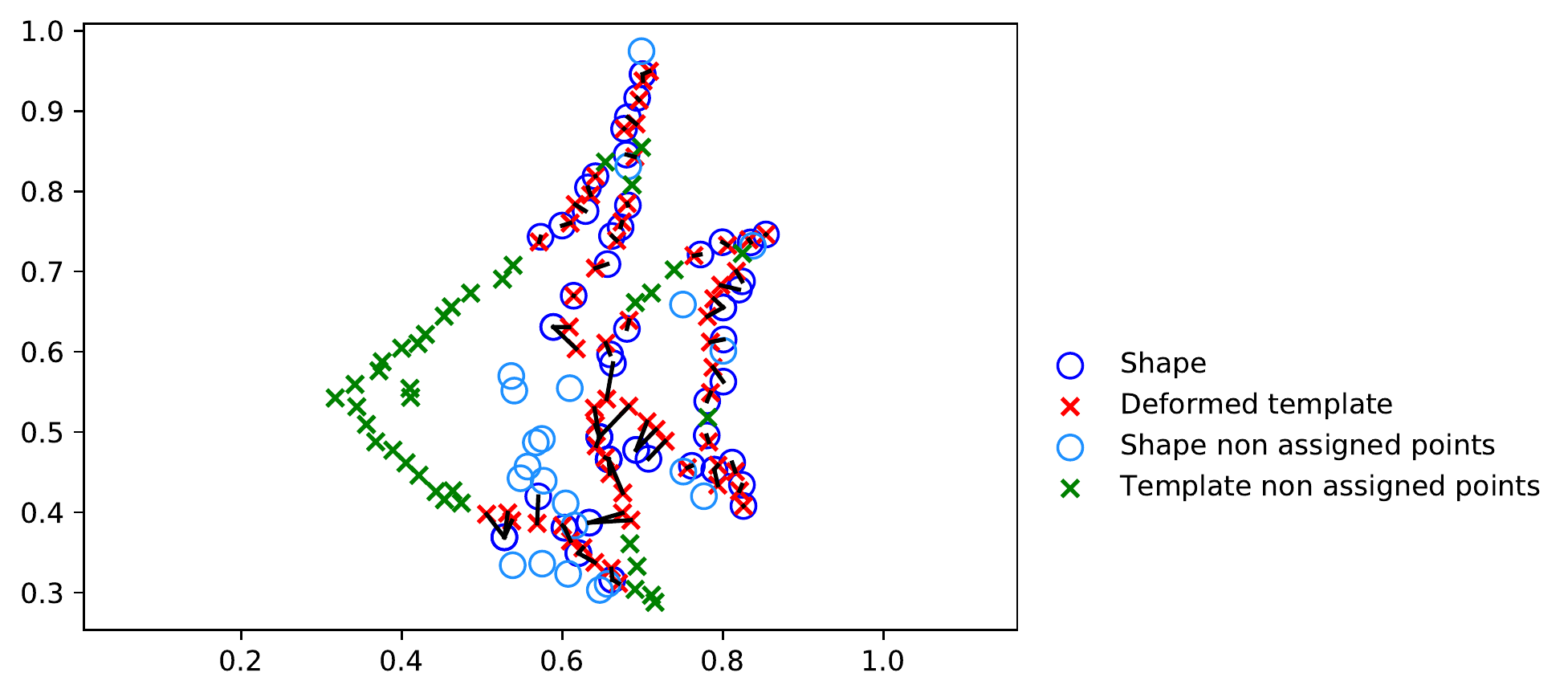}\caption{Our registration}\label{subfig:FishOurs}
		\end{subfigure}
		\caption{Example for missing data performance with \textit{Fish Dataset}. The target can be seen in blue circles, while the deformed reference after registration is represented with red and green crosses. Points with and without correspondence are identified both for the reference and target shape. The two results for BCPD are run with different values of parameter $\lambda$ responsible for controlling the expected length of the deformation --- small values of $\lambda$ allow for more deformation and vice-versa. With BCPD we can either get enough non-rigidity to fit the existing points, at cost of collapsing the missing region, or preserve the shape of this segment at the cost of a rigid transformation that can not appropriately fit the observed points. With SFGP, it is possible to allow a level of non-rigid deformations that fits the shape details, while correctly identifying the missing regions. }
		\label{fig:Fish_Example}
	\end{figure}

	\begin{figure}[hbt!]\centering
		\includegraphics[scale=0.5]{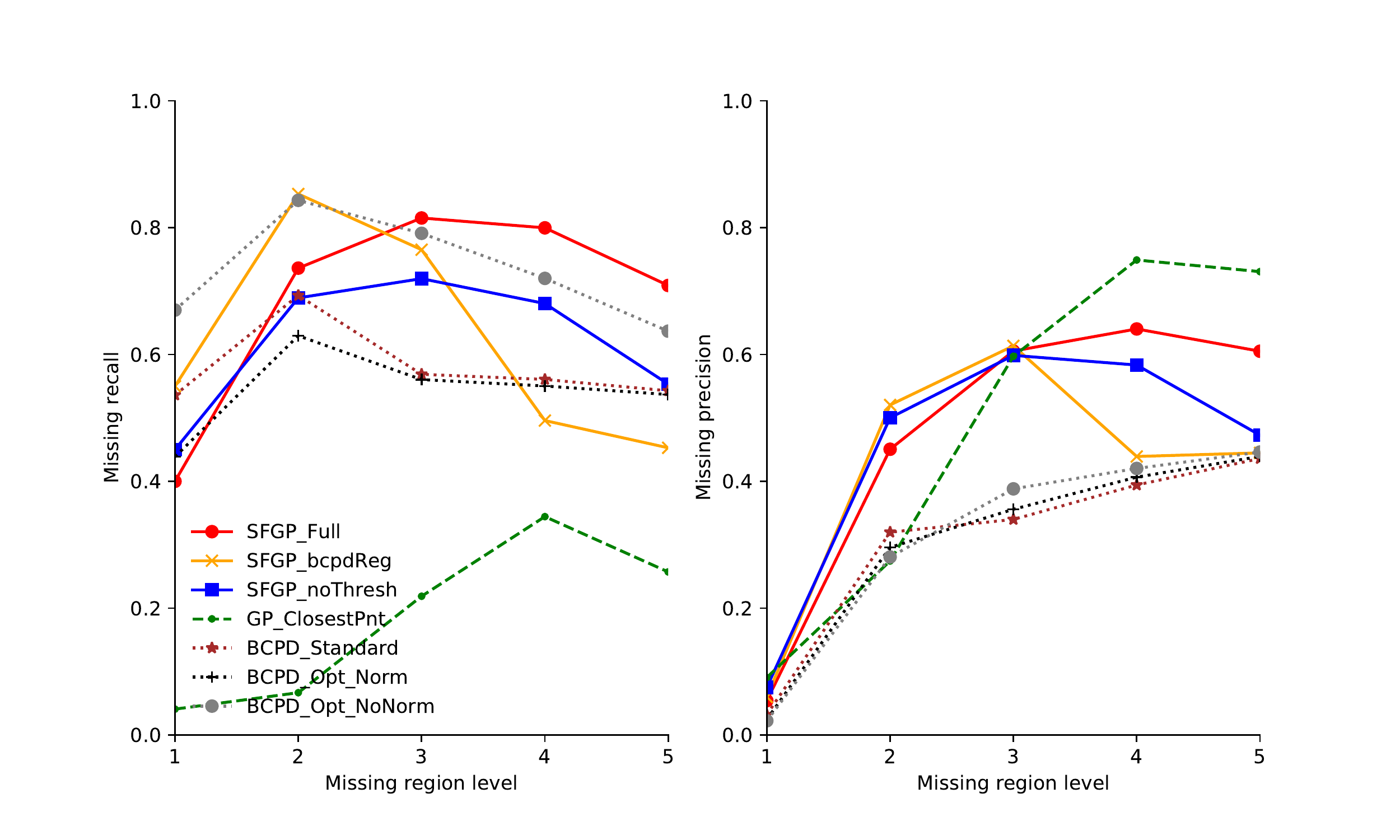}
		\caption{Recall and precision for increasing missing region.  A high recall indicates that the method is able to identify most of the missing points, while a high precision means that most of the points identified as missing are in fact missing. }
		\label{fig:Missing_details}
	\end{figure}

	\subsection{3D Ear simulated data}\label{subsec:Exp_3d_ears}
	\subsubsection{Dataset}
	
	In this section, we test our method with 3D ear data, the main goal of our work. The dataset is obtained from the \textit{Ear Dataset} in \cite{article:3DMM_Ears_Dai_Data_augmented}, with subsequent transformations to achieve more realistic shapes, mimicking the real-life challenges observed in raw scans. We denote the transformed shapes as \textit{Simulated Dataset} and an example may be found in Figure~\ref{fig:ear3d}. The transformations applied include missing data, outliers, measurement noise and a slight rotation, translation and scaling --- a more detailed description can be found in Supplementary Material (see~\ref{app:dataset_ears}).
	
	To facilitate computation, the dataset and reference were subsampled to around 3000 points. Further, we consider as possible targets only a subset of the initial 500 samples contained in the dataset, to account for the lack of variability observed between shapes (see our previous work \cite{article:UnsupervisedPipeline} for a more detailed explanation). I.e., to ensure that the reference does not closely resemble the target shapes, we select samples presenting larger deformations with respect to the reference (measured as the average of Euclidean distance between corresponding shape points). The reference was chosen as the first shape of the dataset. Usually, one tries to achieve a reference as close as possible to all shapes in the dataset (e.g. mean shape) in order to improve shape fitting. However, since we face a problem of lack of variability in our dataset, we opt for this strategy to increase the shape difference to the targets.

	\subsubsection{Discussion}
	
	Figure~\ref{fig:ears_boxplot} presents the fitting results with SFGP, BCPD \cite{article:Registration_BCPD} and ANISO \cite{article:comparison_method} for the \textit{Simulated Dataset}. For fairness, we consider the SFGP with both a Squared Exponential (SE) kernel and a sum of an SE and PCA kernel. The PCA kernel is obtained from the sample covariance of the training dataset: a subset of the original \textit{Ear Dataset} not included in the possible targets of the experiment. While the SE kernel introduces a similar prior to BCPD, the PCA kernel does so for ANISO. The distance error is presented separately for the missing and non-missing regions, for additional insight into the behaviour of each method.
	
	A first comparison is due between SFGP without the PDM kernel and BCPD. While the non-missing points present a slightly higher error for our method, this is largely compensated by the distance error occurring in the missing regions. In practice, this entails that unreasonable shapes are less likely to be originated from the fitting with SFGP, thus proving its advantage for this particular setting of extensive missing data.
	
	Secondly, the comparison between SFGP and ANISO evidences the need for a tailored method to handle outliers and missing data. The mere addition of a shape model to the probabilistic registration is not enough to overcome such obstacles (even with the anisotropic variance proposed by the authors).
	
	As expected, we note that the addition of a shape model to SFGP improves the registration results, particularly for the non-missing regions. Under this prior, our method presents a lower distance error not only for the missing regions but for the complete shape.
	
	Finally, we include an example of registration with SFGP and the closest point approach in Figure~\ref{fig:ear3d}. The limitations of the latter are clearly evidenced, as well as the capability of our method to overcome them. SFGP avoids the collapse of the posterior section when large regions of the ear are missing and adequately fits the anterior regions. Despite this, there is still room for improvement as seen on the top front region, where the deformed reference does not entirely fit the target.
	
	A remark on computational time is also due. While BCPD takes around $10$ seconds per shape with acceleration and $20$ minutes without, SFGP currently requires around $100$ minutes. However, this is an unoptimized version and does not employ any method to deal with the high dimensionality of the data, to which GPR is sensible. On one hand, the same acceleration used to obtain $P$ in BCPD, will reduce computational time. On the other, several GPR tools for large data settings already exist and could be included to reduce time complexity. Both of these should be contemplated in future work.

	\begin{figure}[hbt!]
		\centering
		\begin{subfigure}[t]{\linewidth}\centering
			\includegraphics[scale=0.5]{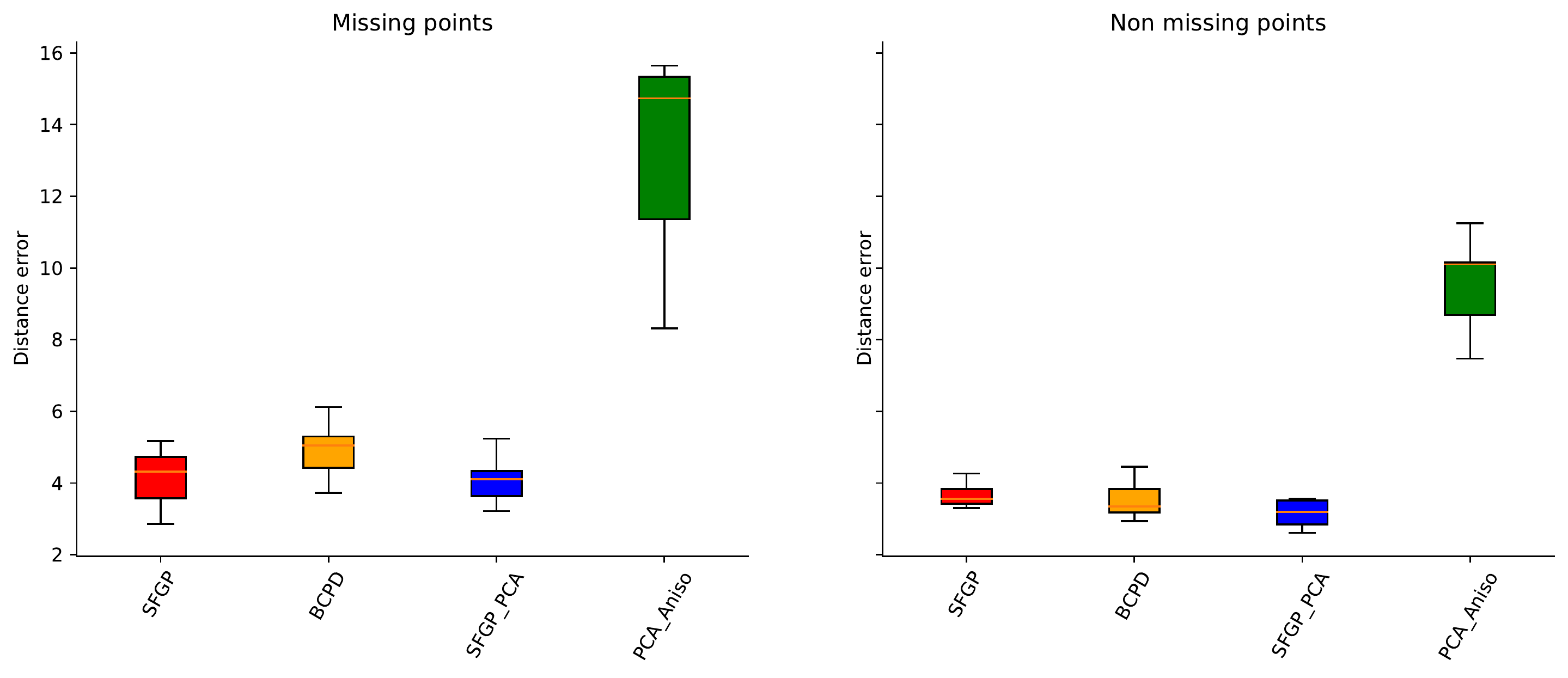}
		\end{subfigure}
		
		\begin{subfigure}[t]{\linewidth}\centering
			\includegraphics[scale=0.5]{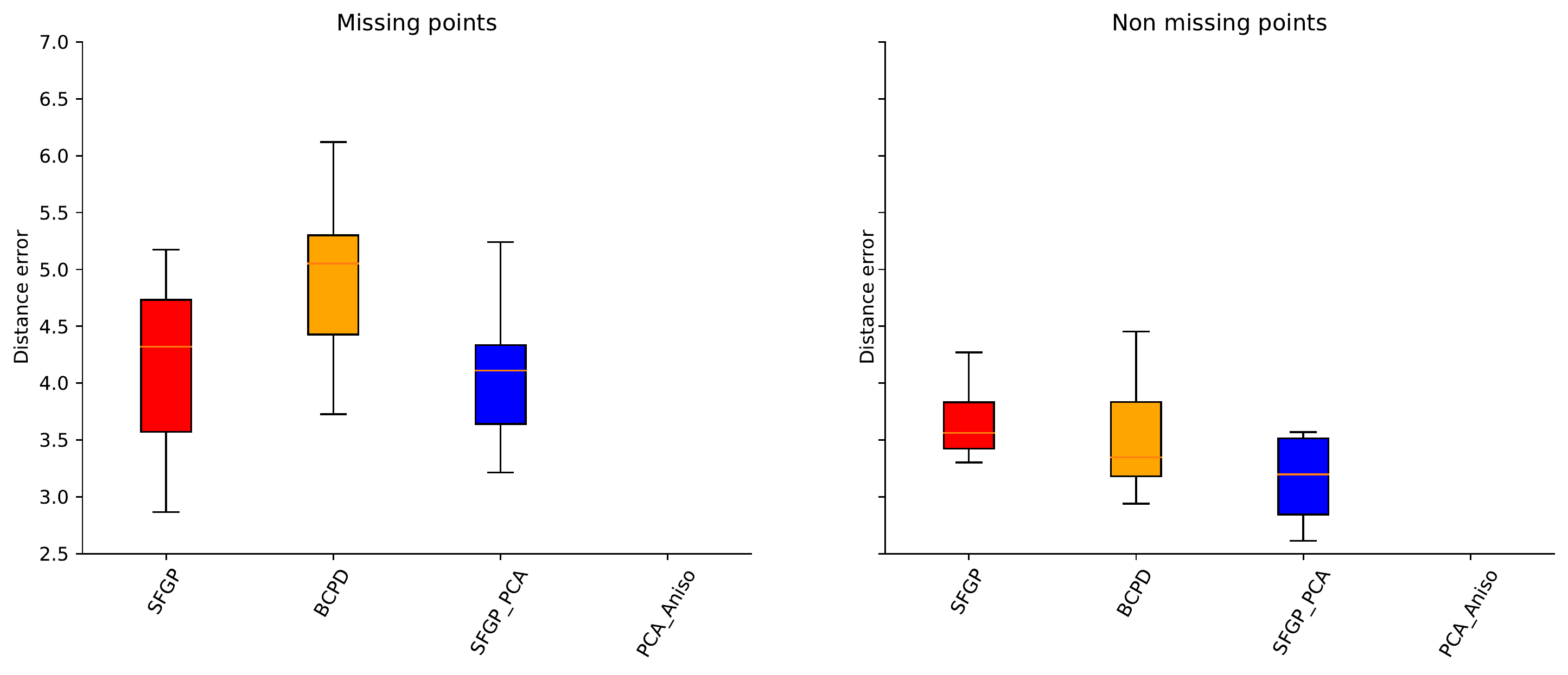}
		\end{subfigure}
		
		\caption{Results for the fitting of a reference to the \textit{Simulated Dataset} with a probabilistic registration method (\textit{BCPD}), SFGP using a Squared Exponential kernel (\textit{SFGP}), SFGP using a PCA kernel (\textit{SFGP\_PCA}) and probabilistic registration with PCA kernel and anisotropic variance (\textit{PCA\_ANISO}). The boxplot is obtained from the mean euclidean distance between the true target shapes and the deformed reference, for the entire dataset. On the left, we consider only the subset of points that are missing and, on the right, the remaining ones. The bottom row contains a zoom-in of the top plots along the $y$-axis, for better visualization of the first 3 methods. }
		\label{fig:ears_boxplot}
	\end{figure}

	\begin{figure}[hbt!]\centering
		\begin{subfigure}[t]{0.3\textwidth}\centering
			\includegraphics[clip, trim=9cm 1.8cm 9.5cm 2.2cm, scale=0.4]{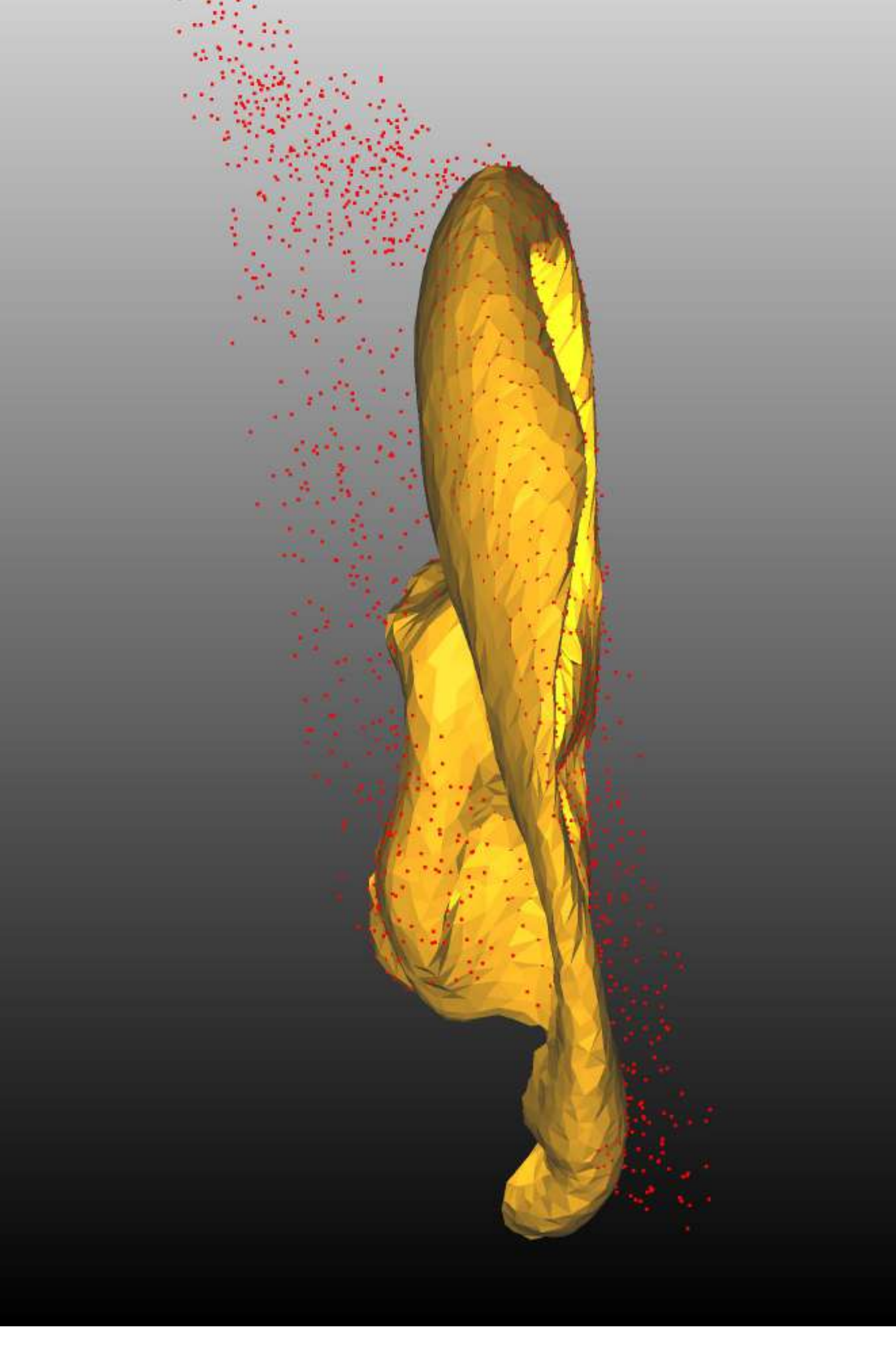}	\caption{Ground truth shape}\label{subfig:ear}
		\end{subfigure}	
		\begin{subfigure}[t]{0.3\textwidth}\centering
			\includegraphics[clip, trim=9cm 1.4cm 9.5cm 2.6cm, scale=0.4]{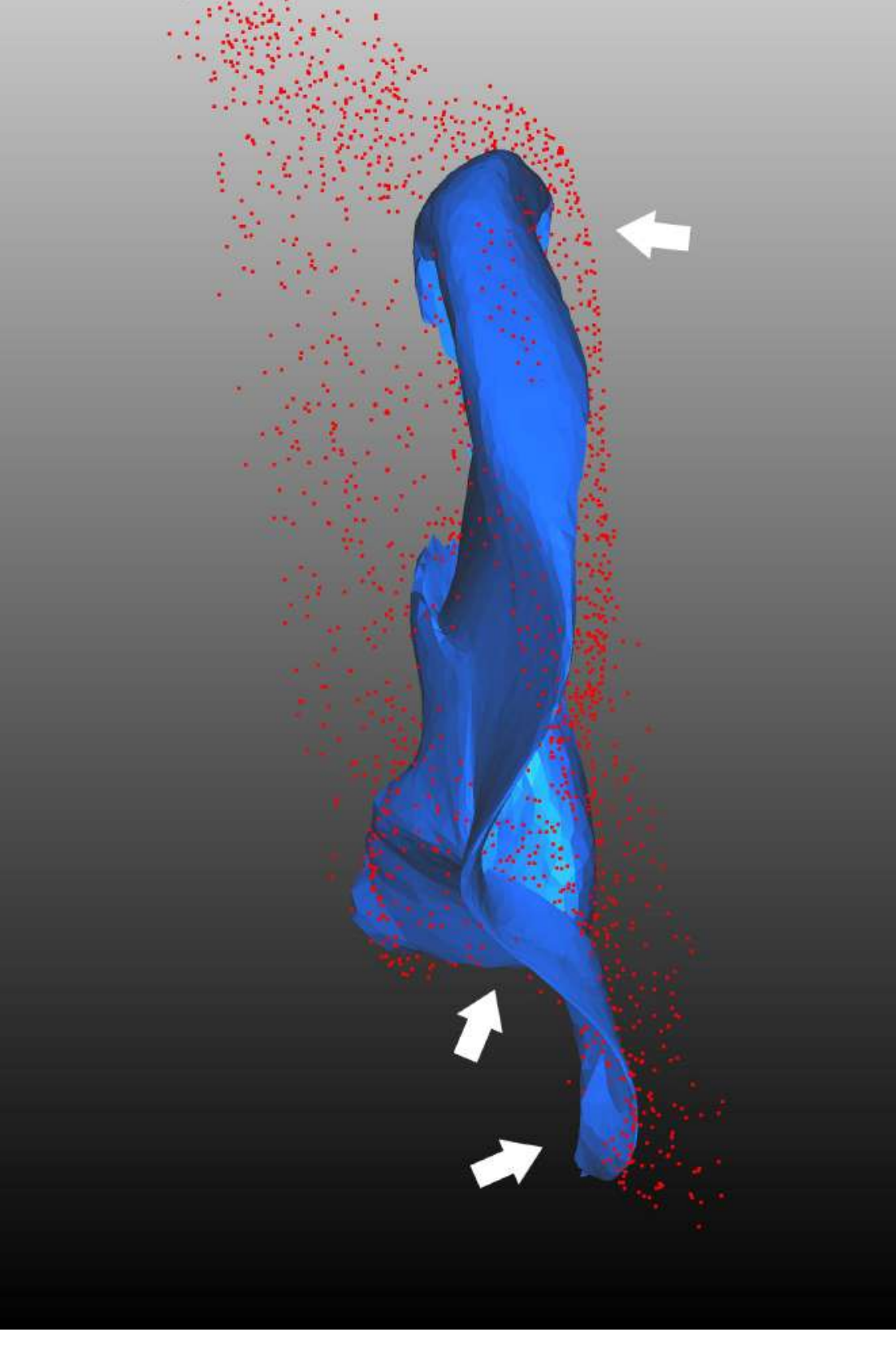}\caption{Closest Point }\label{subfig:ears}
		\end{subfigure}	
		\begin{subfigure}[t]{0.3\textwidth}\centering
			\includegraphics[clip, trim=9cm 1.5cm 9.5cm 2.5cm, scale=0.4]{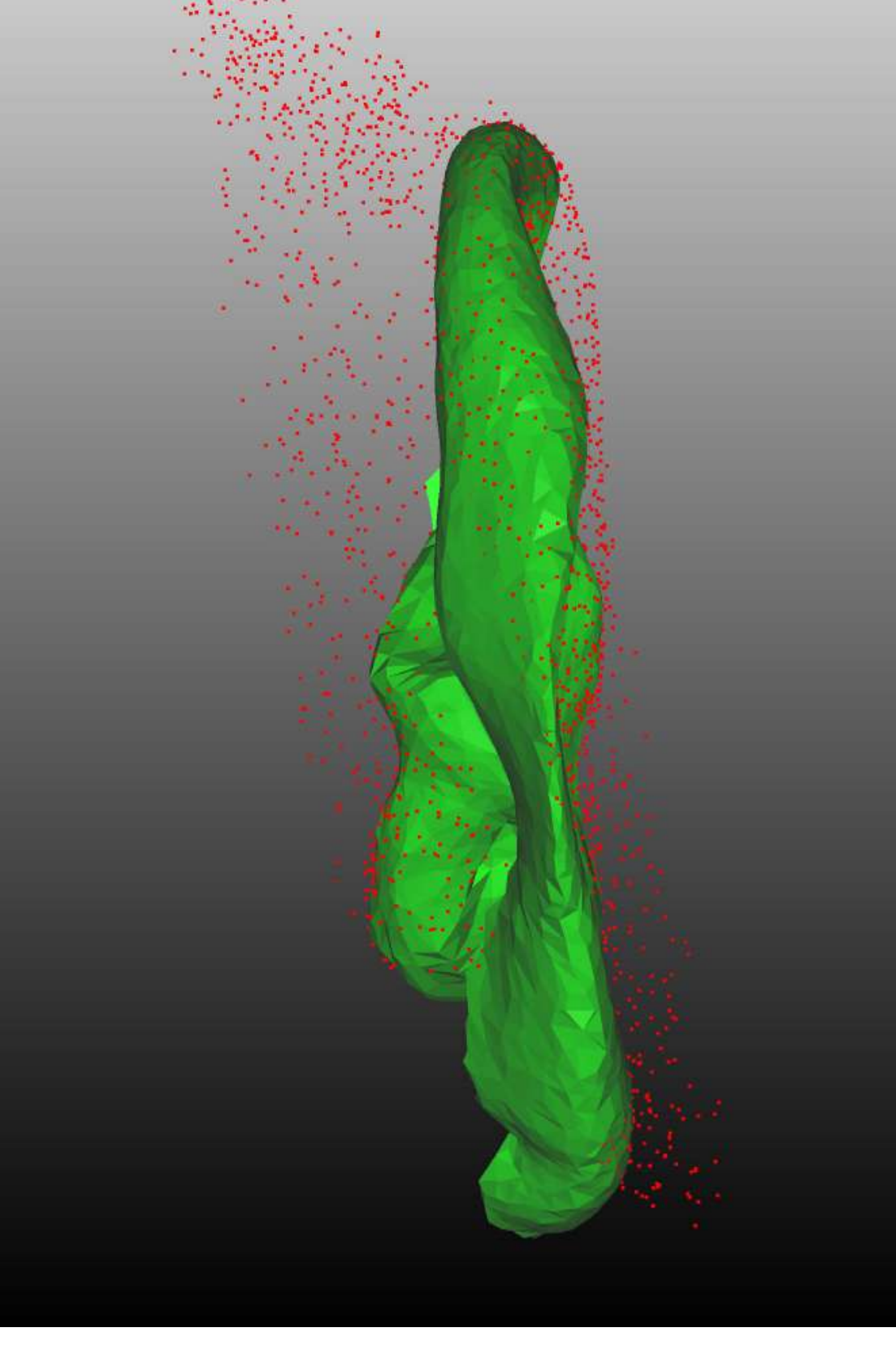}\caption{Our method, SFGP}\label{subfig:esars}
		\end{subfigure}	
		\caption{Lateral view of shape fitting for a 3D ear shape. On the left, the ground truth mesh is represented in yellow and the simulated ear as a point cloud in red (with noise, missing data and outliers). In the middle, the result obtained from the GP with closest point approach is depicted as a blue mesh. On the right, the result from the application of SFGP is represented in green. The middle approach presents three main limitations, indicated by white arrows. Near the top, it fails to completely fill the front part, which our method can only partly overcome. However, large improvements are seen in the two bottom problematic regions. The large missing area (middle arrow) and the presence of data only for the front part of the ear (bottom arrow), both cause the reference to collapse for the middle approach. SFGP is able to overcome these challenges leading to an increased resemblance to the original shape.}
		\label{fig:ear3d}
	\end{figure}

	\section{Concluding remarks and future work}
	\label{sec:conclusions}
	
	We developed a method that bridges the gap between the Gaussian Process framework used in 3D Morphable Models and the probabilistic registration methods, by formulating the shape fitting problem in a GPR multi-annotator setting. This allows us to benefit from advantages on both sides and obtain a method particularly suited for shape fitting in the presence of extensive missing data --- a useful tool for challenging shapes such as the human ear.
	
	Naturally, even if the missing points are correctly identified, the shape prediction in those regions will be as good as the prior model. Therefore, it is beneficial to have a more complex and accurate model, able to express more knowledge regarding the particular shape. As stated, the GP framework offers a very suitable setting, with kernels expressing intuitive properties of the shapes. So, defining a more appropriate kernel is the logical next step. It would also be pertinent to study how the parallel with the probabilistic registration holds when we introduce the missing point set and the threshold, to have a more theoretical insight into the properties of our method.

	\appendix

	\section*{Acknowledgments}
	Computational resources provided by INDACO Platform, which is a project of High Performance Computing at the University of Milan \url{http://www.unimi.it}.
	
	\bibliographystyle{siam}
	\bibliography{references}
	
\end{document}